\newcommand{\colorline}[1]{{\color{black}{#1}}}
\newtheorem{theorem}{Theorem}
\newtheorem{definition}{Definition}
\newtheorem{assumption}{Assumption}
\begin{document}

\title{LTD: Low Temperature Distillation for Gradient Masking-free Adversarial Training}

\author{Erh-Chung Chen}
\affiliation{%
  \institution{National Tsing Hua University}
  \city{HsinChu}
  \state{HsinChu}
  \country{Taiwan}}
\email{s107062802@m107.nthu.edu.tw}

\author{Che-Rung Lee}
\affiliation{%
  \institution{National Tsing Hua University}
  \city{HsinChu}
  \state{HsinChu}
  \country{Taiwan}}
\email{cherung@cs.nthu.edu.tw}

\renewcommand{\shortauthors}{Chen and Lee}

\begin{abstract}
    Adversarial training is a widely adopted strategy to bolster the robustness of neural network models against adversarial attacks. This paper revisits the fundamental assumptions underlying image classification and suggests that representing data as one-hot labels is a key factor that leads to vulnerabilities. However, in real-world datasets, data ambiguity often arises, with samples exhibiting characteristics of multiple classes, rendering one-hot label representations imprecise. To address this, we introduce a novel approach, Low-Temperature Distillation (LTD), designed to refine label representations. Unlike previous approaches, LTD incorporates a relatively low temperature in the teacher model, while maintaining a fixed temperature for the student model during both training and inference. This strategy not only refines assumptions about data distribution but also strengthens model robustness and avoids the gradient masking problem commonly encountered in defensive distillation. Experimental results demonstrate the efficacy of the proposed method when combined with existing frameworks, achieving robust accuracy rates of 58.19\%, 31.13\%, and 42.08\% on the CIFAR-10, CIFAR-100, and ImageNet datasets, respectively, without the need for additional data.
\end{abstract}

\begin{CCSXML}
<ccs2012>
<concept>
<concept_id>10010147.10010178</concept_id>
<concept_desc>Computing methodologies~Artificial intelligence</concept_desc>
<concept_significance>500</concept_significance>
</concept>
<concept>
<concept_id>10002978</concept_id>
<concept_desc>Security and privacy</concept_desc>
<concept_significance>500</concept_significance>
</concept>
<concept>
<concept_id>10010147.10010257</concept_id>
<concept_desc>Computing methodologies~Machine learning</concept_desc>
<concept_significance>500</concept_significance>
</concept>
</ccs2012>
\end{CCSXML}

\ccsdesc[500]{Computing methodologies~Artificial intelligence}
\ccsdesc[500]{Security and privacy}
\ccsdesc[500]{Computing methodologies~Machine learning}

\keywords{adversarial training, AI safety}


\maketitle

\section{Introduction}
\label{sec:intro}
Deep neural networks (DNNs) have achieved remarkable success across a wide range of complex tasks, including image classification \citep{krizhevsky2012imagenet}, object detection \citep{wang2022yolov7}, image captioning \citep{herdade2019image}, and semantic analysis \citep{zhang2018deep}. These capabilities underpin numerous real-world applications such as self-driving car \citep{grigorescu2020survey} and machine translation \citep{devlin2018bert}. As the deployment of DNNs expands across diverse and critical domains, there is growing attention toward practical challenges beyond mere accuracy. These include concerns such as model robustness, model compression, and low-precision training or inference.

Among these challenges, robustness has emerged as a particularly pressing concern. Despite their impressive performance, DNNs are vulnerable to adversarial attacks — strategically crafted perturbations that are often imperceptible to humans yet capable of misleading high-performing models \citep{kurakin2016adversariala, szegedy2013intriguing}. These adversarial examples are not confined to the digital domain; they have been shown to pose physical-world threats in applications such as the cell phone camera attack \citep{kurakin2018adversarial} or road sign attack \citep{eykholt2018robust,zolfi2021translucent}. Beyond inference-time attacks, backdoor attacks — manipulations introduced during the training process — further exacerbate the vulnerability of machine learning systems \citep{yao2019latent, chen2017targeted}.

The development of robust models capable of withstanding adversarial threats remains an ongoing and complex challenge. Among the most effective defense strategies is adversarial training, a method that generates adversarial examples during training and minimizes the objective caused by these adversarial examples. While adversarial training with PGD attack \cite{madry2019deep} and its varieties \cite{pmlr-v97-zhang19p,pang2020bag,gowal2020uncovering} have demonstrated promising results, the computational cost associated with adversarial training remains relatively high. To address this concern, faster training methods have been introduced to reduce training costs but accuracy is sacrificed \cite{shafahi2019adversarial,wong2020fast,Chen_2020_ACCV}.  Moreover, challenges such as gradient masking \citep{athalye2018obfuscated} — a phenomenon that obscures gradient information used in crafting adversarial examples — can lead to misleading estimates of robustness and complicate the evaluation of defense mechanisms.

A crucial but often overlooked aspect of model performance and robustness is label annotation. For image classification tasks, the quality and accuracy of label annotations directly affect the model's ability to learn effectively. Upon revisiting the foundational assumptions of image classification, we observe that none of those basic assumptions are satisfied in the real-world scenario. Typically, images are represented as one-hot vectors, where each image is assigned to exactly one class. However, this representation fails to capture the complex relationships between classes. In practice, real-world datasets often contain ambiguous samples, where images exhibit characteristics from multiple classes. This suggests that similar classes may share common features, leading to fuzzy semantic distances between them. Such oversimplified representations hinder effective learning and make models more susceptible to adversarial attacks. To address this limitation, it is crucial to develop more sophisticated label representations, and numerous strategies have been proposed, such as knowledge distillation \cite{hinton2015distilling}, label smoothing \cite{muller2019does}, learning from the noisy labels \cite{song2022learning}. One early approach, defensive distillation \cite{papernot2016distillation}, aimed to integrate knowledge distillation with adversarial training to enhance the robustness of the target models against adversarial examples. Unfortunately, several works have reported that this kind of strategy is unreliable \cite{athalye2018obfuscated,lee2020gradient}.

Motivated by these insights, this work introduces a novel knowledge distillation framework — Low-Temperature Distillation (LTD) — to enhance model robustness against adversarial attacks. In this paper, we focus on defending against restricted white-box attacks, where complete information about the target networks is accessible and perturbations generated by adversarial attacks are bounded. Our key contributions are summarized as follows:
\begin{itemize}
    \item We identify that conventional one-hot labeling exacerbates model vulnerability by failing to capture features from multiple classes. Our analyses reveal that even minimal label noise can significantly degrade model predictions and robustness.
    \item We propose Low-Temperature Distillation (LTD), a distillation-based framework that enables target models to learn richer inter-class feature representations from a teacher model. LTD applies fixed but distinct temperatures to the teacher and student models to better preserve informative gradients during training. 
    \item The role of the teacher model is to construct a better label representation. Nevertheless, robust teacher models are not an essential requirement for a knowledge distillation framework when applied to adversarial training.
    \item This modification enhances robustness without encountering the gradient masking problem during inference. Additionally, it preserves high-quality gradients for crafting adversarial examples during the training phase, particularly in the late stage of training.
    \item We conduct extensive evaluations on CIFAR-10, CIFAR-100, and ImageNet datasets. When integrated with Adversarial Weight Perturbation (AWP) \citep{wu2020adversarial}, LTD achieves robust accuracies of 58.19\% on CIFAR-10, 31.13\% on CIFAR-100, and 42.08\% on ImageNet — representing a notable improvement over prior methods under the same network architecture. 
\end{itemize}

The rest of this paper is organized as follows. Section \ref{sec:survey} reviews related work on adversarial training and knowledge distillation frameworks. Section \ref{sec:method} revisits assumptions of multi-class classification and motivates the use of soft labels for better robustness. Section \ref{sec:algo} details the proposed LTD algorithm. Experimental results are presented in Section \ref{sec:exp}. The conclusion and future work are given in the last section.

\section{Related Works}
\label{sec:survey}
In this section, we review prior work in two main areas: (1) adversarial attacks and defenses, and (2) the knowledge distillation framework.

\subsection{Adversarial Attack}
Adversarial examples are inputs modified by imperceptible perturbations designed to deceive classifiers. While a model may correctly classify a clean input, its corresponding adversarial variant is often misclassified. Formally, adversarial examples are defined as:
\begin{equation}
\label{eq:adv_data}
\mathcal{S} = \left\{x' \middle\vert 
                    \begin{array}{l}
                         \arg \max Z(x;\theta)_i = \arg \max y_i \\ 
                         \arg \max Z(x;\theta)_i \neq \arg \max Z(x';\theta)_i  \\
                         ||x' - x ||_\infty \leq \epsilon
                    \end{array}
              \right\},
\end{equation}
where $x$ and $x'$ denote the original examples and the corresponding adversarial examples, respectively; the victim classifier $Z$ is a function of an input image and model's weights $\theta$, and $\epsilon$ is the allowed distance in $L_\infty$ space. The classifier outputs a logit $q_i$ for class $i$ and predicts the corresponding label by 
\begin{equation}
    h= \arg \max q_i.
\end{equation}

In practice, adversarial attacks can be classified into two categories based on the amount of information accessible to the attacker. In white-box attacks, the attacker has full information about the victim model including its architecture and parameters. These attacks typically leverage gradient information to generate adversarial examples by maximizing the loss function. Notable white-box techniques include Projected Gradient Descent (PGD) \citep{madry2019deep} and AutoAttack \citep{croce2020reliable}, which are widely adopted as benchmarks for evaluating robustness. In contrast, black-box attacks occur when the attacker has limited information about the victim model. In contrast, black-box attacks assume restricted access, typically limited to model outputs. Attackers estimate gradient directions by querying the model with multiple inputs and analyzing the response differences \citep{chen2017zoo, andriushchenko2019square}.

\subsection{Adversarial Training}
Adversarial training is one of the most widely used techniques for improving model robustness. It involves generating adversarial examples during training and minimizing the loss of these examples. The standard adversarial training objective \citep{madry2019deep} is formulated as:
\begin{equation}
\label{eq:adv_train}
\min_{\theta} \mathop{\mathbb{E}}_{x \sim \mathcal{D}} \max_{\tilde{x} :D(\tilde{x} ,x) < \epsilon } {L(Z(\tilde{x}; \theta) , y)},
\end{equation}
where $x$ is sampled from the data distribution $\mathcal{D}$, and $D(\tilde{x}, x)$ is a distance metric. The goal is to minimize the worst-case loss within the given $\epsilon$ ball. Since the strongest adversarial examples are indeterminable in advance, the inner maximization is usually replaced by known attacks in practice. However, a notable issue with this approach is that it can significantly degrade the model’s accuracy on original examples, as these are not sampled in the optimization process.

A significant limitation of this approach is that it often leads to reduced accuracy on clean data, as these samples are underrepresented in the optimization objective. To mitigate this issue, TRADES \citep{pmlr-v97-zhang19p} was introduced. TRADES decomposes the loss into two components: the standard classification loss on natural examples and a regularization term encouraging smooth decision boundaries:
\begin{equation}
\label{eq:TRADES}
    L_{\textrm{TRADES}}(x,x',y) = L(Z(x;\theta),y) + \lambda \Delta L(x,x',y;\theta),
\end{equation}
where $L(Z(x;\theta),y))$ is an ordinary objective and $\Delta L(x,x',y;\theta)$ is usually the KL divergence as a regularization term. The first term in (\ref{eq:TRADES}) maximizes the similarity between the output distribution and natural data with its corresponding label, while the second term encourages the output distribution to be smooth and pushes the decision boundary away from adversarial examples. The key benefit of KL divergence is that it is label-free, enabling the use of additional unlabeled data for improving robustness \cite{uesato2019labels,carmon2019unlabeled,gowal2020uncovering}.

Despite its effectiveness, adversarial training \cite{pmlr-v97-zhang19p,uesato2019labels,carmon2019unlabeled,gowal2020uncovering} is highly sensitive to the quality of gradients. If gradients are uninformative or obscured, the training process may fail to generate meaningful adversarial examples. This can result in an overestimation of robustness, especially against gradient-based attacks. Gradient masking refers to a phenomenon in which a model appears robust against gradient-based adversarial attacks but remains vulnerable to alternative attack strategies, such as gradient-free or black-box methods. A work \citep{athalye2018obfuscated} proposed a set of diagnostic criteria to identify this issue. Violations of these criteria often signal the presence of gradient masking and can lead to an overestimation of robustness. As a result, a model is considered truly robust only if it produces informative gradients and withstands both white-box and black-box attacks. Building on this, a guideline \citep{carlini2019evaluating} provided best practices for designing and evaluating adversarial defenses. These recommendations emphasize the importance of maintaining gradient quality, avoiding masking effects, and conducting comprehensive robustness assessments.

\subsection{Knowledge Distillation}
\label{sec:KD}
Knowledge distillation \citep{hinton2015distilling} was initially proposed to compress large models for efficient deployment on resource-constrained devices. It has since evolved into a foundational component of numerous training strategies \citep{gou2020knowledge}. The central idea is to provide meaningful label representations from a pre-trained model, known as the teacher model, to guide the training of a student model. These soft labels contain richer class information than hard labels and can promote better generalization. Given a temperature parameter $\tau$, the softmax function used to generate soft labels $p$ is defined as:
\begin{equation}
\label{eqn:KD_softmax}
p^{T=\tau}_i = \frac{\exp(q/\tau)_i}{\sum^{k}_{j=1} \exp(q/\tau)_j},
\end{equation}
where $q$ is the logit computed by the given model. The target model is trained by the following loss function:
\begin{equation}
\label{eqn:target_loss}
    L = -\sum^{k}_{i=1}{y_i \log p_i^{s,T=1}} \\
    + \lambda \sum^{k}_{i=1}{p_i^{t,T=\tau} \log\left( \frac{p_i^{t,T=\tau}}{p_i^{s,T=\tau}}\right)} ,
\end{equation}
where $y$ is the given one-hot labels; $p^{s,T=\tau}$ and $p^{t,T=\tau}$ are labels given from the target model and the teacher model using the temperature $\tau$ respectively, and $\lambda$ is a scaled factor. The first term in (\ref{eqn:target_loss}) is an ordinary categorical loss and the second term is Kullback–Leibler Divergence (KLD). To maintain a balance between the two losses, the value of $\lambda$ is set to $\tau^2$ generally \cite{hinton2015distilling}.

Defensive distillation \citep{papernot2016distillation} was the first attempt to integrate this framework into adversarial defense. However, subsequent research identified key shortcomings, particularly the unintended effects of scaling via $\tau^2$, which can suppress gradient magnitudes and lead to gradient masking or vanishing. Recent work has addressed these concerns using gradient-free attack evaluations to assess actual robustness \citep{croce2020reliable}. This analytical method offers a clearer understanding of the limitations and challenges posed by unintended alterations in the defensive distillation process. 

\section{Data Labeling}
\label{sec:method}
In this section, we revisit the fundamental assumptions of k-class classification in the ideal case, before exploring the challenges and limitations these assumptions encounter in real-world scenarios. We also propose an alternative label representation to more effectively address the presence of ambiguous data in real-world datasets
\subsection{Fundamental Assumptions of Classification Problem}
\label{sec:def}
In a k-class classification problem, three implicit assumptions are required: the closed-world assumption, the independent and identically distributed (i.i.d.) assumption, and the clean and large data assumption \cite{zhang2020towards}. The closed-world assumption supposes that the number of the class $k$ is predefined and all examples must belong to exactly one predefined class. The i.i.d. assumption supposes that all samples on $\mathcal{D}_{\text{train}}$ and $\mathcal{D}_{\text{test}}$ are drawn from an identical distribution. Under the i.i.d. assumption, the objective can be approximated by empirical risk from observed samples on $\mathcal{D}_{\text{train}}$. The clean and big data assumption supposes that all collected data should be well-labeled and sufficiently large to cover the entire population.

With three assumptions, a model can effectively learn the distinct features of each class by minimizing an objective function iteratively across the training dataset. Under the closed-world assumption, the ground truth labels are typically represented as one-hot vectors. For a given label $y$, the one-hot vector $\mathbf{y}$ is defined as:
\begin{equation}
\label{eq:one_hot}
  \textbf{y} =
    \begin{cases}
      1 \quad & i = y, \\
      0 \quad & \text{otherwise}.
    \end{cases} 
\end{equation}
A softmax function, a special case of (\ref{eqn:KD_softmax}) with temperature $T=1$, is typically appended to the model to normalize the output and ensure it aligns with the probabilistic interpretation, where the sum of the probabilities for each class equals one. The optimization problem is generally expressed as:
\begin{equation}
\label{eq:nat_train}
\min \mathop{\mathbb{E}}_{x \sim \mathcal{D}} -\sum^{k}_{i=1}{y_i \log p_i},
\end{equation}
where $y_i$ and $p_i$ represent the ground truth label's probability and the model's predicted probability for the $i$-th class, respectively. In practice, the dataset is divided into two subsets: a training set and a validation set. The training set should be sufficiently large to cover all possible situations, with sample weights adjusted based on the frequency of their occurrence. The purpose of the validation set is to assess the model's ability to generalize, ensuring that the model performs well on unseen data.

\subsection{Classification Problem in Real-world Scenario}
\label{sec:def_real}

\begin{figure}[t]
\centering
\begin{minipage}{.22\linewidth}
    \includegraphics[width=\textwidth]{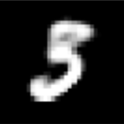}
\end{minipage}
\begin{minipage}{.22\linewidth}
    \includegraphics[width=\textwidth]{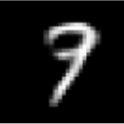}
\end{minipage}
\begin{minipage}{.22\linewidth}
    \includegraphics[width=\textwidth]{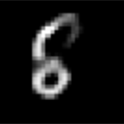}
\end{minipage}
\begin{minipage}{.22\linewidth}
    \includegraphics[width=\textwidth]{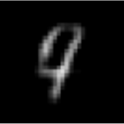}
\end{minipage}
\caption{Ambiguous images in MNIST dataset violate closed world assumption.}
\label{fig:ambiguous_mnist}
\end{figure}

\begin{figure}[t]
\centering
\begin{minipage}{.22\linewidth}
    \includegraphics[width=\textwidth]{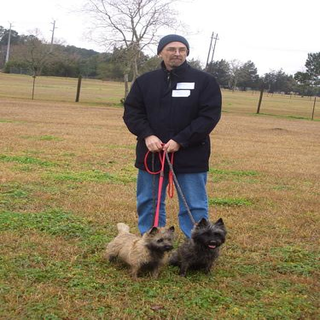}
\end{minipage}
\begin{minipage}{.22\linewidth}
    \includegraphics[width=\textwidth]{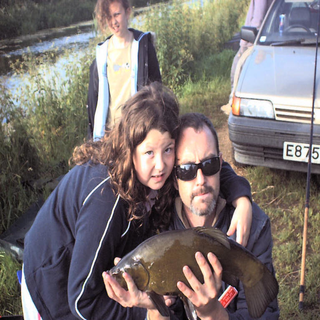}
\end{minipage}
\begin{minipage}{.22\linewidth}
    \includegraphics[width=\textwidth]{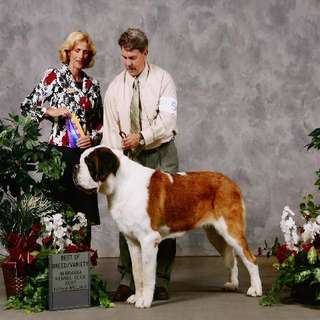}
\end{minipage}
\begin{minipage}{.22\linewidth}
    \includegraphics[width=\textwidth]{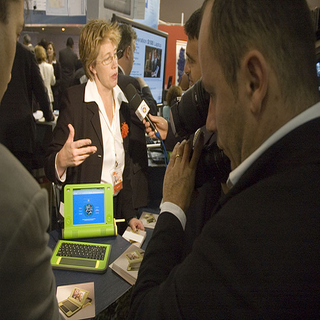}
\end{minipage}
\caption{Ambiguous images in ImageNet dataset contain multiple objects.}
\label{fig:imagenet_mult_objs}
\end{figure}

Although the original training procedure using softmax cross entropy loss (SCE) with one-hot vectors, as defined in (\ref{eq:nat_train}), achieves good performance with low empirical loss, we argue that this approach does not hold well in real-world scenarios. 
One of the key issues is that models trained with the standard SCE objective often struggle with samples drawn from diverse or mismatched distributions, leading to poor generalization. In these cases, the model exhibits high accuracy on the training data but fails to generalize effectively to unseen or adversarial examples. This phenomenon is typically referred to as overfitting \cite{rice2020overfitting}. As a result, several challenges arise when applying this framework to real-world data.

Moreover, minimizing the objective in (\ref{eq:nat_train}) often leads to overconfident predictions on ambiguous samples, with probabilities for non-selected classes fading to near zero. This is because the softmax function encourages the model to assign high probabilities to only one class, disregarding the uncertainty inherent in ambiguous data. From a sample-wise perspective, two images might belong to the same class but exhibit very different features. Figure \ref{fig:ambiguous_mnist} illustrates that certain images in MNIST dataset are located in the boundary of two classes in semantic space. For such samples, it is reasonable to consider them as belonging to more than one class. Similarly, Figure \ref{fig:imagenet_mult_objs} highlights images from the ImageNet dataset with multiple objects, further challenging the closed-world assumption.

\colorline{In addition to these concerns, the predefined classes often do not adequately account for the intra-class and inter-class relationships, resulting in non-uniform semantic distances between classes \cite{li2023modeling}.} For example, in CIFAR-10 dataset, the classes automobile and truck have a relatively short inter-class distance. Similarly, in the ImageNet dataset, the classes \emph{sunglass} (n04355933) and \emph{sunglasses} (n04356056) are nearly identical in meaning, but the current annotation system treats them as distinct classes. The same issue arises with \emph{laptop} (n03642806) and \emph{notebook} (n03832673), where the terms are often used interchangeably but \emph{notebook} may refer to a book of plain paper. Previous studies argued that the wrong annotation procedure might cause performance degradation \cite{beyer2020we,tsipras2020imagenet}. Recent studies suggest that re-labeling to accommodate multi-labels or utilizing hierarchical relationships, such as those defined in WordNet, can mitigate these issues \cite{yun2021re,beyer2020we,chatterjee2023imagenet}.

Based on these observations, \colorline{we argue that the objective defined in (\ref{eq:nat_train}) is not an adequate metric when ambiguous examples are present in the dataset. A situation may arise where two models exhibit the same accuracy, but one model misclassifies more trivial samples than the other. In this scenario, we would consider the latter model superior, despite its higher misclassification rate in the ambiguous area. However, the current objective function does not provide insight into how well the model handles such ambiguous samples.} 
It is essential to design an alternative metric to measure the influence of ambiguous examples and precisely estimate distribution mismatch.

To address these shortcomings and break the closed-world assumption, we propose using oracle probability distributions rather than one-hot vectors for label encoding. We define the discrepancy between the model's output and the oracle distribution using the Kullback-Leibler Divergence (KLD), which can be formally expressed as:
\begin{definition}
Let $\mathcal{D}$ be the set of all data to be classified, $x$ be an instance in $\mathcal{D}$, $y^g(x)$ be the oracle probability distribution of $x$, and $p$ be the probability outputs by $Z$ of $x$. The discrepancy between the model's output and the oracle probability distribution is defined as:
\begin{equation}
\label{eq:prob_SCE}
    G(Z) =  \mathop{\mathbb{E}}_{(x,y^g) \sim \mathcal{D}} \left[ \sum^{k}_{i=1} {y^g_i \log \frac{y^g_i}{p_i}} \right]
    = \mathop{\mathbb{E}}_{(x,y^g) \sim \mathcal{D}} \left[ -\sum^{k}_{i=1} y^g_i \log p_i + \mathrm{Const.} \right].
\end{equation}
\end{definition}
The constant term reflects the intrinsic properties of the entire dataset. In the case of an ideal dataset, this term vanishes, and the equation reduces to the standard softmax cross-entropy (SCE) loss. Once a dataset is provided, the value of this term becomes fixed and can be omitted without affecting the evaluation of the model's performance. In this framework, a lower value of $G(Z)$ indicates a better alignment between the model’s output and the oracle distribution, suggesting that the model has effectively captured the inherent uncertainty and ambiguity within the data.

\subsection{Oracle Distribution Estimation}
\label{sec:soft_label}
The exploration of more effective label representations remains a crucial yet under-explored topic in machine learning. \colorline{In this paper, we propose the use of soft labels generated via knowledge distillation as a promising alternative to one-hot encoding. This approach aims to minimize the discrepancy defined in Equation \ref{eq:prob_SCE}, thereby better capturing the inherent ambiguity in real-world datasets.}

Knowledge distillation provides flexibility by allowing the adjustment of temperature, which can be tuned to reflect varying degrees of uncertainty across datasets, as illustrated in Figures~\ref{fig:ambiguous_mnist} and \ref{fig:imagenet_mult_objs}. Additionally, a pre-trained teacher model in this framework can capture more nuanced data distributions, including inter-class relationships, leading to more informative and expressive label representations. To support this approach, we conducted a theoretical analysis in a binary classification setting with a single example. The analysis shows that soft labels derived through low-temperature distillation can outperform one-hot labels under relatively mild conditions. Further details of this analysis are provided in Appendix \ref{sec:appendix_analysis}.

Intuitively, high-confidence examples exhibit a large tolerance for minor fluctuations in output probability, which do not significantly affect the final prediction. In contrast, fluctuations have a much greater impact on ambiguous examples. To investigate this, we performed simulations using synthesized data to mimic the distribution of real-world datasets, incorporating a proportion of ambiguous data. The oracle probabilities for this data were drawn from a Dirichlet distribution with varying concentration parameters, with the experimental configuration detailed in Appendix \ref{sec:appendix_simulation}. \colorline{Our results suggest that even a small proportion of noisy or ambiguous data can significantly degrade classifier performance. In contrast to one-hot encoding, which has limitations in representing ambiguity, the use of soft labels allows for a more flexible representation. By slightly increasing the temperature, we can adjust the label distribution without requiring additional data, thereby improving the generalization ability of the target model, even when the distribution is imprecise.} Consequently, we argue that soft labels can relax the implicit assumptions inherent in traditional classification problems and mitigate the influence of adversarial examples situated in ambiguous regions.

\section{Algorithm and Implementations}
\label{sec:algo}

\subsection{Training Framework}
\label{sec:our_algo}
\begin{figure*}[t]
\centering
    \includegraphics[trim=2cm 3.5cm 2cm 0.5cm,clip=true, width=0.9\textwidth]{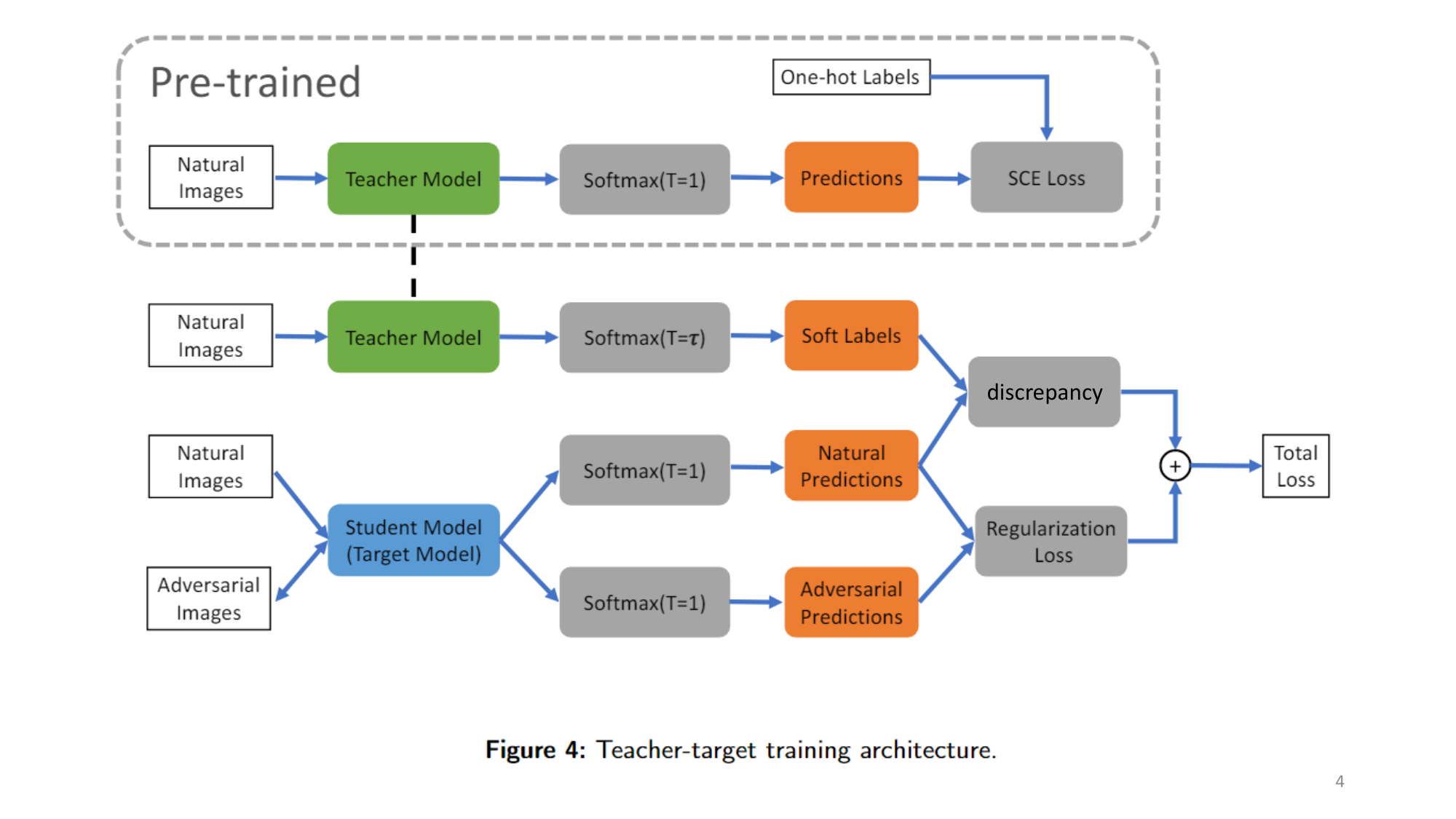}
    \caption{Teacher-target training architecture.}
\label{fig:arch}
\end{figure*}

Based on our analysis, we propose a training framework, called Low-Temperature Distillation (LTD), as illustrated in Figure \ref{fig:arch}. The framework follows the principles of knowledge distillation, where the teacher model in LTD is initially trained using the standard softmax cross-entropy (SCE) loss with natural images and one-hot encoded labels. The trained teacher model has high natural accuracy but is poor in robustness. The target model, which is the student model here, is trained by the same objective proposed by TRADES \cite{pmlr-v97-zhang19p} but replacing the one-hot label with the soft label generated by the teacher model. The soft labels can be generated in advance or on the fly. In contrast to defensive distillation \cite{papernot2016distillation}, LTD uses a relatively low temperature for the teacher model and employs distinct, fixed temperatures for the student model. During the training stage, adversarial examples are crafted using the target model's information and the original label $y$.

A fundamental distinction between our approach and TRADES, as well as related methods \cite{wu2020adversarial,pang2020bag}, lies in the distribution assumption for natural data. By replacing the one-hot labels with soft labels, we mitigate the risk of the output distribution converging to an incorrect distribution, thus avoiding an exacerbation of the divergence from the oracle distribution. Unlike previous approaches that often end training after the second learning rate decay to mitigate overfitting \cite{rice2020overfitting}, our approach involves mimicking the oracle distribution through a naturally trained model with low-temperature distillation.

We emphasize that this subtle but critical modification allows for a significant improvement in robustness while preserving natural accuracy, as it ensures the quality of gradients is preserved both during inference and throughout training, particularly in the later stages. Moreover, LTD is an easy-to-implement framework that can be integrated into existing models.

\subsection{Quality of Gradients}
The quality of gradients is a critical factor in assessing whether the robustness of trained models has been overestimated. Previous studies have highlighted that defensive distillation suffers from the gradient masking problem \cite{athalye2018obfuscated}, as demonstrated by the Carlini-Wagner (CW) attack \cite{carlini2017towards}. The underlying cause of this issue lies in the manipulation of temperature during the distillation process, which alters the magnitude of gradients during both training and inference. Specifically, the gradient of the loss with respect to the input image, given the output probability, can be formulated as a function of the output probability:
\begin{equation}
\nabla_x L_{\mathrm{SCE}} = (p_t-1)\nabla_x q_t + \sum_{i \neq t}{p_i \nabla_x q_i},    
\end{equation}
where $p_i$ is the probability of class $i$ and $q_i$ is the logit of class $i$. This equation shows the gradient almost vanishes when $p_t$ is close to 1, making it challenging for attackers to to exploit gradient-based optimization.

In the defensive distillation framework, the temperatures of the target mode in the training stage ($T_t$) and in the inference stage ($T_i$) are different, where $T_t$ is high and $T_i=1$. It causes the magnitude of logits in the inference time is $(T_t/T_i)=T_t$ times larger than that of in training time. As a consequence, the largest logit dominates, and the output probability converges to a one-hot vector and falls into the area of gradient-vanishing. It leads to effectively masking the gradient and reducing the model's ability to respond to adversarial perturbations. 

In contrast, the proposed method maintains a consistent temperature for the target model throughout both the training and inference phases, ensuring that the magnitude of the gradient remains well-behaved. This prevents gradient masking and retains the model's vulnerability to small perturbations. As a result, our method avoids the gradient masking problem that typically arises in defensive distillation, thus enhancing the model's robustness.

Additionally, as the soft labels fed to the target model are generated through distillation at a relatively low temperature, this approach prevents the model from overfitting to the one-hot representations. For trivial examples, where the model can easily identify the correct class and the output probabilities of the predicted class are close to $1$, the gradient of the loss function tends to vanish, making gradient-based attacks ineffective. The use of soft labels slightly alters the value of $p_t$, ensuring the gradient does not converge to $1$ and the model retains a non-vanishing gradient, enabling the generation of high-quality adversarial examples for these trivial cases. This becomes particularly important in the middle and late stages of training when the model increasingly produces such examples.

In summary, our method employs fixed but distinct temperatures for teacher and student models, ensuring consistent logit scaling and preventing gradient masking. Additionally, the use of soft labels mitigates overfitting to hard targets and supports the generation of meaningful adversarial examples throughout training. This approach significantly enhances the robustness of the model while maintaining its performance.

\subsection{Temperature Selection}
\label{sec:select_T}

\begin{figure*}[t]
\centering
    \includegraphics[width=0.95\textwidth]{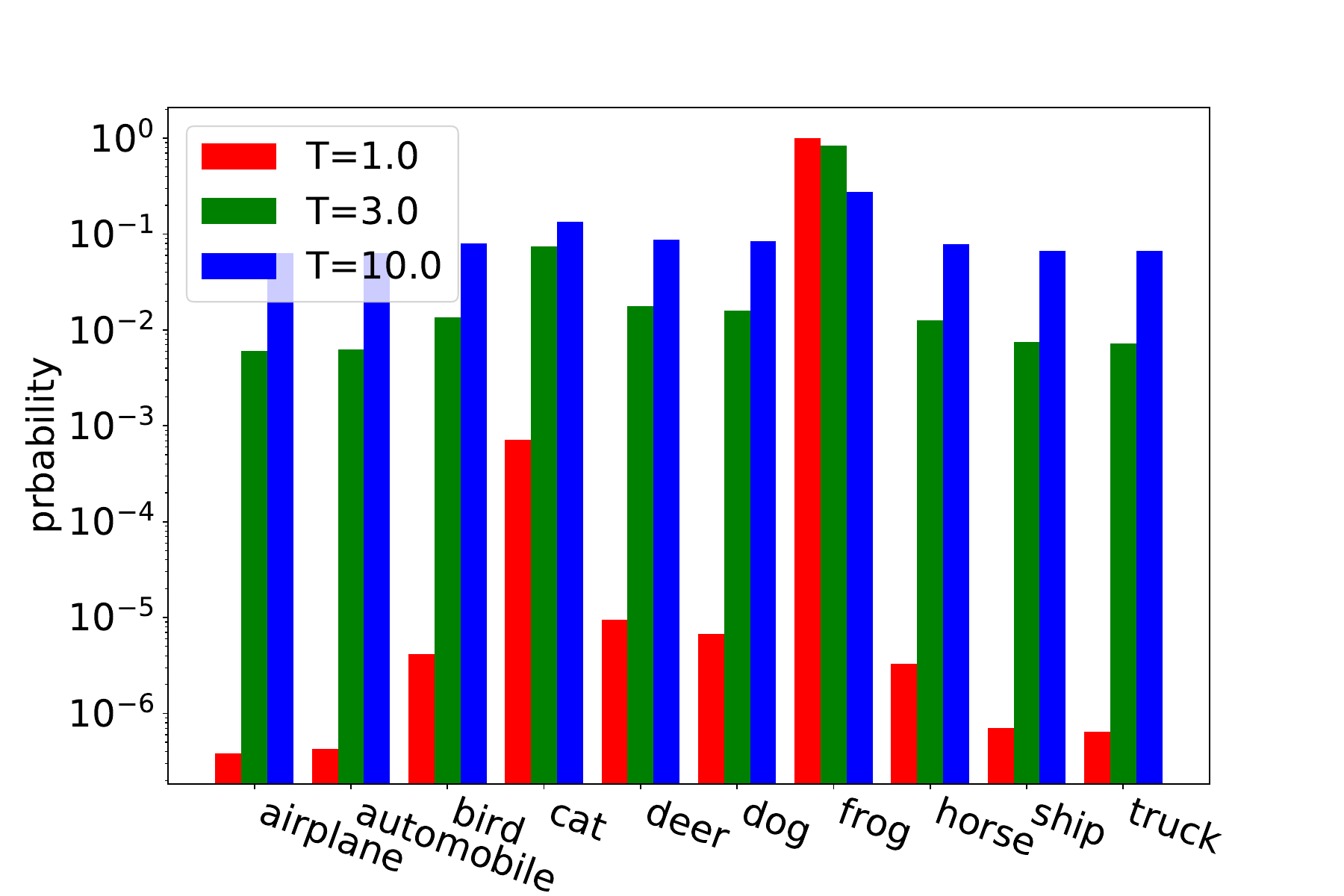}
    \caption{The output probability of the naturally trained model on different temperatures on CIFAR-10 dataset.}
\label{fig:temperature_prob_cifar10}
\end{figure*}

The optimal temperature for the teacher model depends on the dataset. For large-scale datasets like ImageNet, the output distribution may already be well-suited for LTD, and therefore, the temperature should be kept relatively low. In contrast, for datasets like CIFAR-10, which often output highly confident predictions, the output distribution closely resembles a one-hot encoding. In this case, it is beneficial to slightly increase the temperature to avoid overly confident predictions.

However, by the property of (\ref{eqn:KD_softmax}), if the temperature of the teacher model is excessively high, the probability of the winning class rapidly decreases and uniformly re-distributes the probability to the rest classes. As illustrated in Figure \ref{fig:temperature_prob_cifar10}, the probability distribution becomes more uniform as the temperature increases, undermining the model's ability to capture inter-class relationships. This uniformity causes the discrepancy defined in equation (\ref{eq:prob_SCE}) to be dominated by irrelevant classes, thus distorting the distribution and diminishing the relevance of robustness considerations.

To determine the optimal temperature for LTD training, we propose a two-step approach. In the first step, substitute models are trained solely on natural data, and we establish an eligible temperature range $[1, \tau_\mathrm{max}]$ based on whether the natural accuracy of the corresponding models surpasses a predefined threshold. In the second step, we determine the optimal temperature within this range by minimizing objective. It is important to note that the feasible temperature must be lower than $\tau_\mathrm{max}$ since adversarial examples hurt the natural accuracy. By searching for the optimal temperature in this two-step approach, we can ensure that the target model is trained with a temperature that is appropriate for the given data set and model architecture, resulting in improved robustness and natural accuracy.

\section{Experiments}
\label{sec:exp}
This section presents the robustness of LTD against white-box attacks, comparisons with other methods on CIFAR-10; CIFAR-100, and ImageNet datasets, and ablation studies in the temperature and $\lambda$. \colorline{Due to the high computational cost of full adversarial training, we were unable to explore the full range of training configurations and hyperparameter variations for transformer architecture. Despite this limitation, we have listed the robustness results in Appendix \ref{sec:vit_exp}.}

\subsection{Experiment Configurations}
\label{sec:appendix_config}
We implemented our algorithm using CUDA 11.3, cuDNN 8.4 and PyTorch 1.11.0 framework with mixed precision on NVIDIA 32GB V100. 

\subsubsection{Dataset}
\label{sec:exp:dataset}
CIFAR-10 and CIFAR-100 datasets contain 50,000 training images and 10,000 testing images. Each image is of dimension 32x32 and belongs to a specific class. 
The values of all images value are normalized in range in $[0,1]$. Each image in training set are cropped randomly with padding 4 pixels on each border and applied random horizontal flip during training procedure.

ImageNet consists of about 1,300,00 training images collected from real world and each image's  dimensions are vary. We apply random resized crop with 224x224 during the training phase and center crop with 224x224 in inference phase. Since we assume that adversarial examples and natural images may have different statistical information, the commonly used normalization by subtracting the mean and dividing by standard deviation may not be suitable for this case. Instead, each pixel is normalized in range in $[0, 1]$.

\subsubsection{Implementations}
Our implementation builds upon the TRADES \cite{pmlr-v97-zhang19p} and AWP \cite{wu2020adversarial} frameworks to demonstrate that LTD enhances model robustness. For the CIFAR-10 and CIFAR-100 datasets, we use the Wide Residual Network (WRN) \cite{zagoruyko2016wide} with depth 34 and width 10. For ImageNet, we use ResNet-50 and WRN-50-2 architectures. The teacher and target models share identical architectures and hyperparameter configurations. Teacher models are trained from scratch using standard training with one-hot labels, and data augmentation follows the procedure described in Section \ref{sec:exp:dataset}. The teacher models achieve a natural accuracy of at least 94.5\% on CIFAR-10 and 77\% on CIFAR-100. For ImageNet, we use pre-trained models provided by the official PyTorch repository.

For CIFAR-10 and CIFAR-100, target models are trained for 120 epochs. The initial learning rate is set to $0.1$ and reduced by a factor of 10 at the 80\textsuperscript{th} and 100\textsuperscript{th} epochs. We use stochastic gradient descent (SGD) with a momentum of $0.9$, a weight decay of $5e^{-4}$, and Nesterov acceleration enabled. All other settings follow the defaults of the original implementations. Adversarial examples are generated using PGD-8 for both TRADES and AWP.

For the ImageNet experiments, training proceeds for $120$ epochs. The initial learning rate is $0.1$, decayed by a factor of 10 at the 50\textsuperscript{th} and 90\textsuperscript{th} epochs. The optimizer is SGD with a momentum of $0.9$, a weight decay of $1e^{-4}$, Nesterov momentum enabled, and a batch size of 320. Adversarial examples are generated using PGD-6.

\subsubsection{Metric}
The robust accuracy is measured under the white-box environment against the AutoAttack (AA) \cite{croce2020reliable} with default configuration. AutoAttack, includes APGD-CE, APGD-T, FAB-T, and Square attack. APGD-CE, APGD-T, and FAB-T are gradient-based attacks with different objectives or updating rules. The square attack is a query-efficient black-box attack that can detect the gradient masking effect. To improve computational efficiency, AutoAttack first filters out examples already misclassified by the current attack. Only the remaining samples are passed to the subsequent attacks. The final robust accuracy reported by AA accounts for the most successful adversarial attempt among all four attacks, thus ensuring that robustness is not overestimated. For CIFAR-10 and CIFAR-100, the baseline is $\epsilon = 8 / 255$ in $L_{\infty}$ norm. For ImageNet, Robustbench \cite{croce2020robustbench} evaluates the robustness using fixed 5,000 images from validation set within $\epsilon = 4 / 255$ in $L_{\infty}$ norm.

\begin{table}[t]
\begin{minipage}{0.44\textwidth}
\centering
\caption{Competitors from Robustbench on ImageNet \cite{croce2020robustbench}}
\label{table:baseline_imagenet}
\begin{tabular}{ r|c c c c}
    \# & paper & architecture           & acc\textsubscript{nat}[\%] & acc\textsubscript{AA}[\%]  \\
     \hline
     * & LTD                            & WRN-50-2 & 68.10 & 42.08 \\  
     \hline
     1 & \cite{debenedetti2023light}    & XCiT-S12  & 72.34 & 41.78 \\
     2 & \cite{mo2022adversarial}       & Swin-B    & 74.66 & 38.30 \\
     3 & \cite{salman2020adversarially} & WRN-50-2  & 68.46 & 38.14 \\
     \hline
     * & LTD                            & ResNet-50 & 62.40 & 36.82 \\  
     \hline
     4 & \cite{salman2020adversarially} & ResNet-50 & 64.02 & 34.96 \\
     5 & \cite{mo2022adversarial}       & ViT-B     & 68.38 & 34.40 \\
     6 & \cite{robustness}              & ResNet-50 & 62.56 & 29.22 \\
     7 & \cite{wong2020fast}            & ResNet-50 & 55.62 & 26.24 \\
     8 & \cite{salman2020adversarially} & ResNet-18 & 52.92 & 25.32 \\
\end{tabular}
\end{minipage}
\quad
\begin{minipage}{0.52\textwidth}
\centering
\caption{Competitors from Robustbench on CIFAR-10 \cite{croce2020robustbench}}
\label{table:baseline}
\begin{tabular}{ r|c c c c}
    \# & paper & architecture           & acc\textsubscript{nat}[\%] & acc\textsubscript{AA}[\%]  \\
    \hline
     * & AWP \cite{wu2020adversarial} + LTD & WRN-34-20 & 86.28 & 58.19 \\
     \hline
     1 & \cite{addepalli2021towards}    & WRN-34-10 & 85.32 & 58.04  \\
     2 & \cite{gowal2020uncovering}     & WRN-70-16 & 85.29 & 57.20 \\
     \hline
     * & AWP + LTD                       & WRN-34-10 & 85.21 & 56.90 \\
     \hline
     3 & \cite{gowal2020uncovering}     & WRN-34-20 & 85.64 & 56.86 \\
     4 & AWP                            & WRN-34-10 & 85.36 & 56.17 \\
     \hline
     * & TRADES +  LTD                   & WRN-34-10 & 85.63 & 55.09 \\
     \hline
     5 & \cite{pang2020bag}             & WRN-34-20 & 86.43 & 54.39 \\
     6 & \cite{pang2020bag}             & WRN-34-10 & 85.49 & 53.94 \\
     7 & \cite{pang2020boosting}        & WRN-34-20 & 85.14 & 53.74 \\
     8 & \cite{cui2020learnable}        & WRN-34-20 & 88.70 & 53.57 \\
     9 & \cite{zhang2020attacks}        & WRB-34-10 & 84.52 & 53.51 \\
    \hline 
    - & TRADES \cite{pmlr-v97-zhang19p} & WRN-34-10 & 84.92 & 53.08
\end{tabular}
\end{minipage}
\end{table}

\begin{table}[tb]
\centering
\caption{Competitors from Robustbench on CIFAR-100 \cite{croce2020robustbench}}
\label{table:baseline_cifar100}
\begin{tabular}{ r|c c c c}
    \# & paper & architecture           & acc\textsubscript{nat}[\%] & acc\textsubscript{AA}[\%]  \\
    \hline
     * & AWP \cite{wu2020adversarial} + LTD & WRN-34-10 & 64.32 & 31.13 \\
     \hline
     1 & \cite{cui2020learnable}        & WRN-34-20 & 62.55 & 30.20 \\
     2 & \cite{gowal2020uncovering}     & WRN-70-16 & 60.86 & 30.03 \\
     3 & AWP \cite{wu2020adversarial}   & WRN-34-10 & 60.38 & 28.86
\end{tabular}
\end{table}

\subsection{White-box Robustness}
Table \ref{table:baseline_imagenet}, \ref{table:baseline}, and \ref{table:baseline_cifar100}  present the experimental results of ImageNet, CIFAR-10, and CIFAR-100, respectively. In these tables, acc\textsubscript{nat} represents the accuracy on natural data, and acc\textsubscript{AA} represents the robust accuracy against AA attack. The orders of the methods are based on their robustness accuracy acc\textsubscript{AA}. The numbers in $\#$ denote the original rankings of other methods in RobustBench \cite{croce2020robustbench}, and the items with $*$ denote our results. To make a fair comparison, the methods involving generative examples or data from external datasets are excluded. We also included the results of TRADES at the bottom row as a baseline for comparison.

As can be seen, For CIFAR-10 dataset, integrating our method LTD with TRADES increases robust accuracy from 53.08\% to 55.09\%. Further combining LTD with AWP improves it from 56.17\% to 56.90\% using WRN-34-10. When using WRN-34-20, AWP+LTD achieves a robust accuracy of 58.19\%, the highest among all evaluated methods. For CIFAR-100, combining AWP with LTD improves robust accuracy from 28.86\% to 31.13\% with WRN-34-10—again, the best result among methods that do not rely on additional data, while also maintaining a smaller model size compared to other approaches. Similarly, on the ImageNet dataset, LTD improves robustness from 38.14\% to 42.08\%, outperforming several transformer-based models.

These results demonstrate the effectiveness of LTD in enhancing the robustness of deep learning models, particularly on challenging datasets like ImageNet that contain ambiguous and multi-object images. The improvements align with our assumptions outlined in Section \ref{sec:method}. Moreover, LTD preserves gradient quality, ensuring robustness is not overestimated due to gradient masking.

\subsection{Optimal Temperature Selection}
We employed WRN-34-10 architecture with the CIFAR-10 dataset, training the model using TRADES+LTD with varying temperatures in the teacher model to investigate the significance of temperature selection. As mentioned in Section \ref{sec:select_T}, the best temperature is selected in two steps.

As shown in Table \ref{table:temperature_test}, we searched for the best temperature in the range of $[1.0, 50.0]$, excluding the upper end since its natural accuracy is 86.63\% which is too low to accept. The experimental results presented in Table \ref{table:temperature_test} demonstrate that the best temperature for the teacher model is $5.0$. As the temperature increases, the robust accuracy decreases due to the fact that irrelevant classes receive partial probability from the target class, which violates our assumption. This leads to gradient masking, and we can observe that the robust accuracy in the training phase is almost 100\%, but it cannot defend against stronger attacks or unseen attacks. AA identifies the occurrence of gradient masking when the temperature is higher than $15.0$. This phenomenon also has been observed in defensive distillation, where the magnitude of logit is significantly altered.

It is important to emphasize that our method outperforms TRADES across various temperatures. This indicates that the one-hot assumption may be overly confident in real-world datasets. There are several distribution assumptions that can achieve the same or above performance with the existing criteria. Instead, soft label representations generated by the teacher models with low temperatures are good choices although the used temperature is not optimal. Moreover, the above phenomenon is consistent with the simulated results.

\colorline{In practice, selecting the best temperature depends on resource constraints. If the budget allows, comprehensive experiments can be conducted to identify the optimal temperature. However, based on the empirical results of this experiment, we recommend selecting a temperature within the range of $[1, 5]$. Setting the temperature to $1$ as a baseline and gradually increasing it will allow for assessing the cost-effectiveness of investing more resources in finding a better temperature.}

\begin{table}[t]
\centering
\caption{Ablation study on temperature selection.}
\label{table:temperature_test}
\begin{tabular}{ r|c c}
    \ & acc\textsubscript{nat}[\%] & acc\textsubscript{AA}[\%] \\
    \hline
    TRADES & 84.92 & 53.08 \\
    \hline
    T=1.0  & 84.51 & 54.38 \\
    T=2.0  & 84.96 & 54.90 \\
    T=3.0  & 85.48 & 55.03\\
    T=5.0  & \textbf{86.20} & \textbf{55.09}\\
    T=8.0  & 85.23 & 54.63 \\
    T=10.0 & 84.72 & 53.56 \\
    T=12.0 & 77.45 & 43.85 \\
    \hline
    T=15.0 & 94.72 & 0.00 \\
    T=20.0 & 94.39 & 0.00 \\
    \hline
    T=50.0 & 86.63 & * \\
\end{tabular}
\end{table}

\subsection{Ablation study}
\label{sec:appendix_ablation_lambda}

\begin{table}[t]
\begin{minipage}{0.45\textwidth}
\centering
\caption{Ablation study on CIFAR-10}
\label{table:ablation_lambda_cifar10}
\begin{tabular}{ c |c c c}
    $\lambda$ & architecture  & acc\textsubscript{nat}[\%] & acc\textsubscript{AA}[\%]  \\
     \hline
         3.0  & WRN-34-10 & \textbf{87.66} & 56.39 \\
         4.0  & WRN-34-10 & 86.61 & 56.61 \\
         5.0  & WRN-34-10 & 85.81 & 56.72 \\
         6.0  & WRN-34-10 & 85.15 & \textbf{56.91} \\
         7.0  & WRN-34-10 & 84.97 & 56.20 \\
\end{tabular}
\end{minipage}
\quad
\begin{minipage}{0.45\textwidth}
\centering
\caption{Ablation study on CIFAR-100}
\label{table:ablation_lambda_cifar100}
\begin{tabular}{ c |c c c}
    $\lambda$ & architecture  & acc\textsubscript{nat}[\%] & acc\textsubscript{AA}[\%]  \\
     \hline
         3.0  & WRN-34-10 & \textbf{66.67} & 30.63 \\
         4.0  & WRN-34-10 & 65.16 & 30.83 \\
         5.0  & WRN-34-10 & 63.85 & 31.05 \\
         6.0  & WRN-34-10 & 63.31 & \textbf{31.13} \\
         7.0  & WRN-34-10 & 63.06 & 30.94\\
\end{tabular}
\end{minipage}
\end{table}

The primary purpose of $\lambda$ is to strike a balance between the two objectives. Ideally, $\lambda$ should be instance-dependent, and its value can be increased to enhance the robustness of well-classified examples. However, previous works suggested that a fixed value can achieve significant success \cite{wu2020adversarial, gowal2020uncovering}, with the value typically set to 6. Nevertheless, the impact of the hyper-parameter $\lambda$ under a soft label scenario remains unknown.

This experiment investigates the influence of the choice of various $\lambda$ for natural accuracy and robust accuracy. We conducted the ablation study with LTD+AWP on CIFAR-10 and CIFAR-100 datasets. Table \ref{table:ablation_lambda_cifar10} and Table \ref{table:ablation_lambda_cifar100} show the results on CIFAR-10 and CIFAR-100 datasets, respectively. As can be seen, the natural accuracy decreases when the value of $\lambda$ increases while the lowest natural accuracy occurs when $\lambda$ is $7.0$ on both data sets. These results suggest the best choice of $\lambda$ must be lower than $7.0$. Meanwhile, the best robust accuracy occurs when $\lambda$ is $6.0$ which is consistent with previous works. However, if we prefer higher natural accuracy, the optimal configuration of $\lambda$ can be a smaller value.

\subsection{Discussion}

The proposed training framework, which replaces the one-hot labels with soft labels, achieves better robustness than the original training framework, even though the oracle probability distribution remains unknown. We emphasize that \textbf{the proposed method is the first method that effectively mitigates the gradient masking issue that has been observed in defensive distillation}. This approach is suitable for the classification problem which violates closed-world, or clean and big data assumptions. An excellent example of such a problem is ImageNet, where our approach demonstrated a significant improvement of about four percent in terms of robustness. However, it is essential to recognize and address the potential implications linked to our work, as discussed below.

\textbf{Knowledge Distillation} In this paper, we revisited knowledge distillation frameworks for adversarial training and provided a practical solution to correct the gradient masking issue. These results bring a chance that a robust but lightweight or compressed model can be obtained by using knowledge distillation frameworks. However, the robustness evaluation is more sophisticated for compressed models, since the target models may be defeated by adversarial examples crafted by unpruned models.

We acknowledge that the selection of the optimal temperature for knowledge distillation depends on the complexity of the target dataset. In cases where examples can be classified with high accuracy and minimal ambiguity, there is no necessity to modify their label representations. However, for ambiguous examples, it becomes crucial to dynamically adjust the temperature based on the model's confidence in the classification. In our work, we employed a global temperature across all examples, though we recognize that this approach may benefit from refinement in future studies. A potential solution to this issue is to use an auxiliary classifier to identify ambiguous examples and use the output information as a metric for designing temperature adjustment strategies.

Additionally, the choice of the teacher model is an open problem. We chose to train the teacher model naturally, rather than adversarially, because adversarial training may not necessarily produce good teachers in our framework. Adversarial training tends to prioritize robustness over accuracy, and as a result, the natural accuracy of the teacher model may be relatively low. We experimented with using an adversarially trained model as the teacher but found that the trained student models achieved lower natural accuracy. For the CIFAR-10 dataset, the natural accuracy of the student model was lower than 82\%, which is generally unacceptable. While adversarially trained models may extract underlying features that can improve robustness, several issues need to be overcome before they can be used effectively as teacher models in our framework.

On the other hand, recent studies have proposed an alternative implementation of knowledge distillation, involving a teacher model that can defend against adversarial attacks \cite{zi2021revisiting,zhu2021reliable}. Nevertheless, a key distinction between these works and our own lies in the foundational assumption pertaining to label representations. We posit that the susceptibility of DNNs stems from flawed data labeling. Low-temperature distillation stands as one among various methods capable of implicitly recalibrating data distribution and thereby enhancing robustness. Remarkably, the teacher models in our approach remain unburdened by any constraints, demonstrating substantial robustness. Our framework streamlines the incorporation of knowledge distillation into adversarial training, presenting a more straightforward integration design.

\textbf{Label Representation} Estimating the oracle distribution is a crucial problem, but LTD is not the only solution to predict the oracle distribution. AVmixup \cite{lee2020adversarial} augmented training examples by defining virtual examples, utilizing linear interpolation for both input examples and their labels. While the model trained by AVmixup can mitigate the overfitting issue, it cannot defend against CW attacks or PGD attacks with different objectives, especially for CIFAR-100 and more sophisticated data sets. On the other hand, CCAT \cite{stutz2020confidence} further suggested that the label representations should be calibrated by the strength of adversarial examples. Another approach suggested by \cite{chen2021robust} is to train the model from two teachers, one being a naturally trained model and the other being an adversarially trained model, which can alleviate overfitting. Ensemble training may also be a proper solution to avoid the overconfidence problem, but there is a lack of systematic studies for designing teacher models, including the total number of teacher models, the training policy, and the used architectures for each teacher model.

\textbf{Adversarial Training with Additional Data} The most effective strategy for improving the robustness is using additional examples from external data sets. The major reason is that traditional supervised learning almost ignores low-frequency data so DNNs cannot recognize them correctly. Adding extra data can cover the data in the regime of low frequency in the original training set. UAT \cite{uesato2019labels} showed TRADES on CIFAR-10 with 200,000 additional images improved the robustness significantly. In UAT, the authors also concluded that selecting a subset from the additional images properly has better robustness than that using entire additional images. The follow-up work RST \cite{carmon2019unlabeled} designed a special classifier to select relevant images, which joined into the adversarial training set, from another dataset.

However, this type of approach is excluded from our approach. We believe that selecting data under the same distribution is an opening problem, particularly when considering that the distribution of jointly trained examples undergoes a complete shift if chosen by an auxiliary classifier in advance. Furthermore, another criticism is that the number of additional images is much larger than that of CIFAR-10's training set (50,000 images), and processing those data requires more computational cost. Those similar works, which rely on huge additional images, cannot be extended to ImageNet or other large-scale data sets. In contrast, LTD can achieve significant improvement in the robustness of ImageNet data set. However, we believe that one direction of future work is to identify outliers in advance and exclude those examples from the training set to accelerate training speed. 

Our experimental results indicate that increasing the temperature of the teacher model can effectively mimic soft labels and enhance robustness without the need for additional data. While finding the optimal temperature requires training multiple substitute models, which can be costly, even setting the temperature to $1$ yields better robustness than the baseline. Overall, the only additional expense for our approach is training the teacher model with clean data, which is relatively minimal.

\section{Conclusion}
The use of one-hot labels as a representation for classification problems was common practice. However, in this paper, we have demonstrated that one-hot labels are imprecise and one of the vulnerabilities of DNNs arises from ambiguous examples. We have also explored the advantages of soft labels, which can significantly decrease discrepancies between the oracle probability and predictions from a model trained by soft labels. To train robust DNNs, we proposed a modified knowledge distillation framework that utilizes soft labels with a properly low temperature. By using different temperatures for the teacher and student models, we avoided the issue of gradient masking, while the soft labels were able to represent smoother probability distributions among classes. The experimental results on CIFAR-10 and CIFAR-100 data sets show that our approach, when combined with AWP, achieves robust accuracy of 58.19\% and 31.13\% respectively. For ImageNet dataset, our approach gains about 4\% improvement on the robustness. These results provide insights into how to label annotations affect the robustness of deep neural networks. We believe that designing a better label representation for real-world scenarios remains an unexplored issue.

\begin{acks}
We thank the founding support from National Science and Technology Council  ( 113-2622-E-007 -018 ) and the computing infrastructure provided by the National Center for High-Performance Computing, National Institutes of Applied Research (NIAR), Taiwan.
\end{acks}

\bibliographystyle{ACM-Reference-Format}
\bibliography{sample-manuscript}


\begin{thebibliography}{60}


\ifx \showCODEN    \undefined \def \showCODEN     #1{\unskip}     \fi
\ifx \showDOI      \undefined \def \showDOI       #1{#1}\fi
\ifx \showISBNx    \undefined \def \showISBNx     #1{\unskip}     \fi
\ifx \showISBNxiii \undefined \def \showISBNxiii  #1{\unskip}     \fi
\ifx \showISSN     \undefined \def \showISSN      #1{\unskip}     \fi
\ifx \showLCCN     \undefined \def \showLCCN      #1{\unskip}     \fi
\ifx \shownote     \undefined \def \shownote      #1{#1}          \fi
\ifx \showarticletitle \undefined \def \showarticletitle #1{#1}   \fi
\ifx \showURL      \undefined \def \showURL       {\relax}        \fi
\providecommand\bibfield[2]{#2}
\providecommand\bibinfo[2]{#2}
\providecommand\natexlab[1]{#1}
\providecommand\showeprint[2][]{arXiv:#2}

\bibitem[Addepalli et~al\mbox{.}(2021)]%
        {addepalli2021towards}
\bibfield{author}{\bibinfo{person}{Sravanti Addepalli}, \bibinfo{person}{Samyak Jain}, \bibinfo{person}{Gaurang Sriramanan}, \bibinfo{person}{Shivangi Khare}, {and} \bibinfo{person}{Venkatesh~Babu Radhakrishnan}.} \bibinfo{year}{2021}\natexlab{}.
\newblock \showarticletitle{Towards Achieving Adversarial Robustness Beyond Perceptual Limits}. In \bibinfo{booktitle}{\emph{ICML 2021 Workshop on Adversarial Machine Learning}}.
\newblock
\urldef\tempurl%
\url{https://openreview.net/forum?id=SHB_znlW5G7}
\showURL{%
\tempurl}


\bibitem[Alayrac et~al\mbox{.}(2019)]%
        {uesato2019labels}
\bibfield{author}{\bibinfo{person}{Jean-Baptiste Alayrac}, \bibinfo{person}{Jonathan Uesato}, \bibinfo{person}{Po-Sen Huang}, \bibinfo{person}{Alhussein Fawzi}, \bibinfo{person}{Robert Stanforth}, {and} \bibinfo{person}{Pushmeet Kohli}.} \bibinfo{year}{2019}\natexlab{}.
\newblock \showarticletitle{Are labels required for improving adversarial robustness?}
\newblock \bibinfo{journal}{\emph{Advances in Neural Information Processing Systems}}  \bibinfo{volume}{32} (\bibinfo{year}{2019}).
\newblock


\bibitem[Ali et~al\mbox{.}(2021)]%
        {ali2021xcit}
\bibfield{author}{\bibinfo{person}{Alaaeldin Ali}, \bibinfo{person}{Hugo Touvron}, \bibinfo{person}{Mathilde Caron}, \bibinfo{person}{Piotr Bojanowski}, \bibinfo{person}{Matthijs Douze}, \bibinfo{person}{Armand Joulin}, \bibinfo{person}{Ivan Laptev}, \bibinfo{person}{Natalia Neverova}, \bibinfo{person}{Gabriel Synnaeve}, \bibinfo{person}{Jakob Verbeek}, {et~al\mbox{.}}} \bibinfo{year}{2021}\natexlab{}.
\newblock \showarticletitle{Xcit: Cross-covariance image transformers}.
\newblock \bibinfo{journal}{\emph{Advances in neural information processing systems}}  \bibinfo{volume}{34} (\bibinfo{year}{2021}), \bibinfo{pages}{20014--20027}.
\newblock


\bibitem[Andriushchenko et~al\mbox{.}(2020)]%
        {andriushchenko2019square}
\bibfield{author}{\bibinfo{person}{Maksym Andriushchenko}, \bibinfo{person}{Francesco Croce}, \bibinfo{person}{Nicolas Flammarion}, {and} \bibinfo{person}{Matthias Hein}.} \bibinfo{year}{2020}\natexlab{}.
\newblock \showarticletitle{Square attack: a query-efficient black-box adversarial attack via random search}. In \bibinfo{booktitle}{\emph{European conference on computer vision}}. Springer, \bibinfo{pages}{484--501}.
\newblock


\bibitem[Athalye et~al\mbox{.}(2018)]%
        {athalye2018obfuscated}
\bibfield{author}{\bibinfo{person}{Anish Athalye}, \bibinfo{person}{Nicholas Carlini}, {and} \bibinfo{person}{David Wagner}.} \bibinfo{year}{2018}\natexlab{}.
\newblock \showarticletitle{Obfuscated gradients give a false sense of security: Circumventing defenses to adversarial examples}. In \bibinfo{booktitle}{\emph{International Conference on Machine Learning}}. PMLR, \bibinfo{pages}{274--283}.
\newblock


\bibitem[Beyer et~al\mbox{.}(2020)]%
        {beyer2020we}
\bibfield{author}{\bibinfo{person}{Lucas Beyer}, \bibinfo{person}{Olivier~J H{\'e}naff}, \bibinfo{person}{Alexander Kolesnikov}, \bibinfo{person}{Xiaohua Zhai}, {and} \bibinfo{person}{A{\"a}ron van~den Oord}.} \bibinfo{year}{2020}\natexlab{}.
\newblock \showarticletitle{Are we done with imagenet?}
\newblock \bibinfo{journal}{\emph{arXiv preprint arXiv:2006.07159}} (\bibinfo{year}{2020}).
\newblock


\bibitem[Carlini et~al\mbox{.}(2019)]%
        {carlini2019evaluating}
\bibfield{author}{\bibinfo{person}{Nicholas Carlini}, \bibinfo{person}{Anish Athalye}, \bibinfo{person}{Nicolas Papernot}, \bibinfo{person}{Wieland Brendel}, \bibinfo{person}{Jonas Rauber}, \bibinfo{person}{Dimitris Tsipras}, \bibinfo{person}{Ian Goodfellow}, \bibinfo{person}{Aleksander Madry}, {and} \bibinfo{person}{Alexey Kurakin}.} \bibinfo{year}{2019}\natexlab{}.
\newblock \showarticletitle{On evaluating adversarial robustness}.
\newblock \bibinfo{journal}{\emph{arXiv preprint arXiv:1902.06705}} (\bibinfo{year}{2019}).
\newblock


\bibitem[Carlini and Wagner(2017)]%
        {carlini2017towards}
\bibfield{author}{\bibinfo{person}{Nicholas Carlini} {and} \bibinfo{person}{David Wagner}.} \bibinfo{year}{2017}\natexlab{}.
\newblock \showarticletitle{Towards evaluating the robustness of neural networks}. In \bibinfo{booktitle}{\emph{2017 ieee symposium on security and privacy (sp)}}. IEEE, \bibinfo{pages}{39--57}.
\newblock


\bibitem[Carmon et~al\mbox{.}(2019)]%
        {carmon2019unlabeled}
\bibfield{author}{\bibinfo{person}{Yair Carmon}, \bibinfo{person}{Aditi Raghunathan}, \bibinfo{person}{Ludwig Schmidt}, \bibinfo{person}{John~C Duchi}, {and} \bibinfo{person}{Percy~S Liang}.} \bibinfo{year}{2019}\natexlab{}.
\newblock \showarticletitle{Unlabeled data improves adversarial robustness}.
\newblock \bibinfo{journal}{\emph{Advances in Neural Information Processing Systems}}  \bibinfo{volume}{32} (\bibinfo{year}{2019}).
\newblock


\bibitem[Chatterjee et~al\mbox{.}(2023)]%
        {chatterjee2023imagenet}
\bibfield{author}{\bibinfo{person}{Ankita Chatterjee}, \bibinfo{person}{Jayanta Mukherjee}, {and} \bibinfo{person}{Partha~Pratim Das}.} \bibinfo{year}{2023}\natexlab{}.
\newblock \showarticletitle{ImageNet classification using wordnet hierarchy}.
\newblock \bibinfo{journal}{\emph{IEEE Transactions on Artificial Intelligence}} \bibinfo{volume}{5}, \bibinfo{number}{4} (\bibinfo{year}{2023}), \bibinfo{pages}{1718--1727}.
\newblock


\bibitem[Chen and Lee(2020)]%
        {Chen_2020_ACCV}
\bibfield{author}{\bibinfo{person}{Erh-Chung Chen} {and} \bibinfo{person}{Che-Rung Lee}.} \bibinfo{year}{2020}\natexlab{}.
\newblock \showarticletitle{Towards Fast and Robust Adversarial Training for Image Classification}. In \bibinfo{booktitle}{\emph{Proceedings of the Asian Conference on Computer Vision}}.
\newblock


\bibitem[Chen et~al\mbox{.}(2017b)]%
        {chen2017zoo}
\bibfield{author}{\bibinfo{person}{Pin-Yu Chen}, \bibinfo{person}{Huan Zhang}, \bibinfo{person}{Yash Sharma}, \bibinfo{person}{Jinfeng Yi}, {and} \bibinfo{person}{Cho-Jui Hsieh}.} \bibinfo{year}{2017}\natexlab{b}.
\newblock \showarticletitle{Zoo: Zeroth order optimization based black-box attacks to deep neural networks without training substitute models}. In \bibinfo{booktitle}{\emph{Proceedings of the 10th ACM Workshop on Artificial Intelligence and Security}}. \bibinfo{pages}{15--26}.
\newblock


\bibitem[Chen et~al\mbox{.}(2021)]%
        {chen2021robust}
\bibfield{author}{\bibinfo{person}{Tianlong Chen}, \bibinfo{person}{Zhenyu Zhang}, \bibinfo{person}{Sijia Liu}, \bibinfo{person}{Shiyu Chang}, {and} \bibinfo{person}{Zhangyang Wang}.} \bibinfo{year}{2021}\natexlab{}.
\newblock \showarticletitle{Robust overfitting may be mitigated by properly learned smoothening}. In \bibinfo{booktitle}{\emph{International Conference on Learning Representations}}.
\newblock


\bibitem[Chen et~al\mbox{.}(2017a)]%
        {chen2017targeted}
\bibfield{author}{\bibinfo{person}{Xinyun Chen}, \bibinfo{person}{Chang Liu}, \bibinfo{person}{Bo Li}, \bibinfo{person}{Kimberly Lu}, {and} \bibinfo{person}{Dawn Song}.} \bibinfo{year}{2017}\natexlab{a}.
\newblock \showarticletitle{Targeted backdoor attacks on deep learning systems using data poisoning}.
\newblock \bibinfo{journal}{\emph{arXiv preprint arXiv:1712.05526}} (\bibinfo{year}{2017}).
\newblock


\bibitem[Croce et~al\mbox{.}(2021)]%
        {croce2020robustbench}
\bibfield{author}{\bibinfo{person}{Francesco Croce}, \bibinfo{person}{Maksym Andriushchenko}, \bibinfo{person}{Vikash Sehwag}, \bibinfo{person}{Edoardo Debenedetti}, \bibinfo{person}{Nicolas Flammarion}, \bibinfo{person}{Mung Chiang}, \bibinfo{person}{Prateek Mittal}, {and} \bibinfo{person}{Matthias Hein}.} \bibinfo{year}{2021}\natexlab{}.
\newblock \showarticletitle{RobustBench: a standardized adversarial robustness benchmark}. In \bibinfo{booktitle}{\emph{Thirty-fifth Conference on Neural Information Processing Systems Datasets and Benchmarks Track (Round 2)}}.
\newblock
\urldef\tempurl%
\url{https://openreview.net/forum?id=SSKZPJCt7B}
\showURL{%
\tempurl}


\bibitem[Croce and Hein(2020)]%
        {croce2020reliable}
\bibfield{author}{\bibinfo{person}{Francesco Croce} {and} \bibinfo{person}{Matthias Hein}.} \bibinfo{year}{2020}\natexlab{}.
\newblock \showarticletitle{Reliable evaluation of adversarial robustness with an ensemble of diverse parameter-free attacks}. In \bibinfo{booktitle}{\emph{International conference on machine learning}}. PMLR, \bibinfo{pages}{2206--2216}.
\newblock


\bibitem[Cui et~al\mbox{.}(2021)]%
        {cui2020learnable}
\bibfield{author}{\bibinfo{person}{Jiequan Cui}, \bibinfo{person}{Shu Liu}, \bibinfo{person}{Liwei Wang}, {and} \bibinfo{person}{Jiaya Jia}.} \bibinfo{year}{2021}\natexlab{}.
\newblock \showarticletitle{Learnable boundary guided adversarial training}. In \bibinfo{booktitle}{\emph{Proceedings of the IEEE/CVF International Conference on Computer Vision}}. \bibinfo{pages}{15721--15730}.
\newblock


\bibitem[Debenedetti et~al\mbox{.}(2023)]%
        {debenedetti2023light}
\bibfield{author}{\bibinfo{person}{Edoardo Debenedetti}, \bibinfo{person}{Vikash Sehwag}, {and} \bibinfo{person}{Prateek Mittal}.} \bibinfo{year}{2023}\natexlab{}.
\newblock \showarticletitle{A light recipe to train robust vision transformers}. In \bibinfo{booktitle}{\emph{2023 IEEE Conference on Secure and Trustworthy Machine Learning (SaTML)}}. IEEE, \bibinfo{pages}{225--253}.
\newblock


\bibitem[Devlin et~al\mbox{.}(2019)]%
        {devlin2018bert}
\bibfield{author}{\bibinfo{person}{Jacob Devlin}, \bibinfo{person}{Ming-Wei Chang}, \bibinfo{person}{Kenton Lee}, {and} \bibinfo{person}{Kristina Toutanova}.} \bibinfo{year}{2019}\natexlab{}.
\newblock \showarticletitle{Bert: Pre-training of deep bidirectional transformers for language understanding}. In \bibinfo{booktitle}{\emph{Proceedings of the 2019 conference of the North American chapter of the association for computational linguistics: human language technologies, volume 1 (long and short papers)}}. \bibinfo{pages}{4171--4186}.
\newblock


\bibitem[Engstrom et~al\mbox{.}(2019)]%
        {robustness}
\bibfield{author}{\bibinfo{person}{Logan Engstrom}, \bibinfo{person}{Andrew Ilyas}, \bibinfo{person}{Hadi Salman}, \bibinfo{person}{Shibani Santurkar}, {and} \bibinfo{person}{Dimitris Tsipras}.} \bibinfo{year}{2019}\natexlab{}.
\newblock \bibinfo{title}{Robustness (Python Library)}.
\newblock
\newblock
\urldef\tempurl%
\url{https://github.com/MadryLab/robustness}
\showURL{%
\tempurl}


\bibitem[Eykholt et~al\mbox{.}(2018)]%
        {eykholt2018robust}
\bibfield{author}{\bibinfo{person}{Kevin Eykholt}, \bibinfo{person}{Ivan Evtimov}, \bibinfo{person}{Earlence Fernandes}, \bibinfo{person}{Bo Li}, \bibinfo{person}{Amir Rahmati}, \bibinfo{person}{Chaowei Xiao}, \bibinfo{person}{Atul Prakash}, \bibinfo{person}{Tadayoshi Kohno}, {and} \bibinfo{person}{Dawn Song}.} \bibinfo{year}{2018}\natexlab{}.
\newblock \showarticletitle{Robust physical-world attacks on deep learning visual classification}. In \bibinfo{booktitle}{\emph{Proceedings of the IEEE Conference on Computer Vision and Pattern Recognition}}. \bibinfo{pages}{1625--1634}.
\newblock


\bibitem[Gou et~al\mbox{.}(2020)]%
        {gou2020knowledge}
\bibfield{author}{\bibinfo{person}{Jianping Gou}, \bibinfo{person}{Baosheng Yu}, \bibinfo{person}{Stephen~John Maybank}, {and} \bibinfo{person}{Dacheng Tao}.} \bibinfo{year}{2020}\natexlab{}.
\newblock \showarticletitle{Knowledge distillation: A survey}.
\newblock \bibinfo{journal}{\emph{arXiv preprint arXiv:2006.05525}} (\bibinfo{year}{2020}).
\newblock


\bibitem[Gowal et~al\mbox{.}(2020)]%
        {gowal2020uncovering}
\bibfield{author}{\bibinfo{person}{Sven Gowal}, \bibinfo{person}{Chongli Qin}, \bibinfo{person}{Jonathan Uesato}, \bibinfo{person}{Timothy Mann}, {and} \bibinfo{person}{Pushmeet Kohli}.} \bibinfo{year}{2020}\natexlab{}.
\newblock \showarticletitle{Uncovering the Limits of Adversarial Training against Norm-Bounded Adversarial Examples}.
\newblock \bibinfo{journal}{\emph{arXiv preprint arXiv:2010.03593}} (\bibinfo{year}{2020}).
\newblock


\bibitem[Grigorescu et~al\mbox{.}(2020)]%
        {grigorescu2020survey}
\bibfield{author}{\bibinfo{person}{Sorin Grigorescu}, \bibinfo{person}{Bogdan Trasnea}, \bibinfo{person}{Tiberiu Cocias}, {and} \bibinfo{person}{Gigel Macesanu}.} \bibinfo{year}{2020}\natexlab{}.
\newblock \showarticletitle{A survey of deep learning techniques for autonomous driving}.
\newblock \bibinfo{journal}{\emph{Journal of Field Robotics}} \bibinfo{volume}{37}, \bibinfo{number}{3} (\bibinfo{year}{2020}), \bibinfo{pages}{362--386}.
\newblock


\bibitem[Herdade et~al\mbox{.}(2019)]%
        {herdade2019image}
\bibfield{author}{\bibinfo{person}{Simao Herdade}, \bibinfo{person}{Armin Kappeler}, \bibinfo{person}{Kofi Boakye}, {and} \bibinfo{person}{Joao Soares}.} \bibinfo{year}{2019}\natexlab{}.
\newblock \showarticletitle{Image captioning: Transforming objects into words}.
\newblock \bibinfo{journal}{\emph{Advances in neural information processing systems}}  \bibinfo{volume}{32} (\bibinfo{year}{2019}).
\newblock


\bibitem[Hinton et~al\mbox{.}(2015)]%
        {hinton2015distilling}
\bibfield{author}{\bibinfo{person}{Geoffrey Hinton}, \bibinfo{person}{Oriol Vinyals}, {and} \bibinfo{person}{Jeffrey Dean}.} \bibinfo{year}{2015}\natexlab{}.
\newblock \showarticletitle{Distilling the Knowledge in a Neural Network}. In \bibinfo{booktitle}{\emph{NIPS Deep Learning and Representation Learning Workshop}}.
\newblock
\urldef\tempurl%
\url{http://arxiv.org/abs/1503.02531}
\showURL{%
\tempurl}


\bibitem[Krizhevsky et~al\mbox{.}(2012)]%
        {krizhevsky2012imagenet}
\bibfield{author}{\bibinfo{person}{Alex Krizhevsky}, \bibinfo{person}{Ilya Sutskever}, {and} \bibinfo{person}{Geoffrey~E Hinton}.} \bibinfo{year}{2012}\natexlab{}.
\newblock \showarticletitle{Imagenet classification with deep convolutional neural networks}.
\newblock \bibinfo{journal}{\emph{Advances in neural information processing systems}}  \bibinfo{volume}{25} (\bibinfo{year}{2012}).
\newblock


\bibitem[Kurakin et~al\mbox{.}(2017)]%
        {kurakin2016adversariala}
\bibfield{author}{\bibinfo{person}{Alexey Kurakin}, \bibinfo{person}{Ian~J. Goodfellow}, {and} \bibinfo{person}{Samy Bengio}.} \bibinfo{year}{2017}\natexlab{}.
\newblock \showarticletitle{Adversarial Machine Learning at Scale}. In \bibinfo{booktitle}{\emph{5th International Conference on Learning Representations, {ICLR} 2017, Toulon, France, April 24-26, 2017, Conference Track Proceedings}}. \bibinfo{publisher}{OpenReview.net}.
\newblock
\urldef\tempurl%
\url{https://openreview.net/forum?id=BJm4T4Kgx}
\showURL{%
\tempurl}


\bibitem[Kurakin et~al\mbox{.}(2018)]%
        {kurakin2018adversarial}
\bibfield{author}{\bibinfo{person}{Alexey Kurakin}, \bibinfo{person}{Ian~J Goodfellow}, {and} \bibinfo{person}{Samy Bengio}.} \bibinfo{year}{2018}\natexlab{}.
\newblock \showarticletitle{Adversarial examples in the physical world}.
\newblock In \bibinfo{booktitle}{\emph{Artificial intelligence safety and security}}. \bibinfo{publisher}{Chapman and Hall/CRC}, \bibinfo{pages}{99--112}.
\newblock


\bibitem[Lee et~al\mbox{.}(2020a)]%
        {lee2020gradient}
\bibfield{author}{\bibinfo{person}{Hyungyu Lee}, \bibinfo{person}{Ho Bae}, {and} \bibinfo{person}{Sungroh Yoon}.} \bibinfo{year}{2020}\natexlab{a}.
\newblock \showarticletitle{Gradient masking of label smoothing in adversarial robustness}.
\newblock \bibinfo{journal}{\emph{IEEE Access}}  \bibinfo{volume}{9} (\bibinfo{year}{2020}), \bibinfo{pages}{6453--6464}.
\newblock


\bibitem[Lee et~al\mbox{.}(2020b)]%
        {lee2020adversarial}
\bibfield{author}{\bibinfo{person}{Saehyung Lee}, \bibinfo{person}{Hyungyu Lee}, {and} \bibinfo{person}{Sungroh Yoon}.} \bibinfo{year}{2020}\natexlab{b}.
\newblock \showarticletitle{Adversarial vertex mixup: Toward better adversarially robust generalization}. In \bibinfo{booktitle}{\emph{Proceedings of the IEEE/CVF Conference on Computer Vision and Pattern Recognition}}. \bibinfo{pages}{272--281}.
\newblock


\bibitem[Li et~al\mbox{.}(2023)]%
        {li2023modeling}
\bibfield{author}{\bibinfo{person}{Wenbin Li}, \bibinfo{person}{Zhichen Fan}, \bibinfo{person}{Jing Huo}, {and} \bibinfo{person}{Yang Gao}.} \bibinfo{year}{2023}\natexlab{}.
\newblock \showarticletitle{Modeling inter-class and intra-class constraints in novel class discovery}. In \bibinfo{booktitle}{\emph{Proceedings of the IEEE/CVF conference on computer vision and pattern recognition}}. \bibinfo{pages}{3449--3458}.
\newblock


\bibitem[Liu et~al\mbox{.}(2021)]%
        {liu2021swin}
\bibfield{author}{\bibinfo{person}{Ze Liu}, \bibinfo{person}{Yutong Lin}, \bibinfo{person}{Yue Cao}, \bibinfo{person}{Han Hu}, \bibinfo{person}{Yixuan Wei}, \bibinfo{person}{Zheng Zhang}, \bibinfo{person}{Stephen Lin}, {and} \bibinfo{person}{Baining Guo}.} \bibinfo{year}{2021}\natexlab{}.
\newblock \showarticletitle{Swin transformer: Hierarchical vision transformer using shifted windows}. In \bibinfo{booktitle}{\emph{Proceedings of the IEEE/CVF international conference on computer vision}}. \bibinfo{pages}{10012--10022}.
\newblock


\bibitem[Madry et~al\mbox{.}(2018)]%
        {madry2019deep}
\bibfield{author}{\bibinfo{person}{Aleksander Madry}, \bibinfo{person}{Aleksandar Makelov}, \bibinfo{person}{Ludwig Schmidt}, \bibinfo{person}{Dimitris Tsipras}, {and} \bibinfo{person}{Adrian Vladu}.} \bibinfo{year}{2018}\natexlab{}.
\newblock \showarticletitle{Towards Deep Learning Models Resistant to Adversarial Attacks}. In \bibinfo{booktitle}{\emph{International Conference on Learning Representations}}.
\newblock
\urldef\tempurl%
\url{https://openreview.net/forum?id=rJzIBfZAb}
\showURL{%
\tempurl}


\bibitem[Mo et~al\mbox{.}(2022)]%
        {mo2022adversarial}
\bibfield{author}{\bibinfo{person}{Yichuan Mo}, \bibinfo{person}{Dongxian Wu}, \bibinfo{person}{Yifei Wang}, \bibinfo{person}{Yiwen Guo}, {and} \bibinfo{person}{Yisen Wang}.} \bibinfo{year}{2022}\natexlab{}.
\newblock \showarticletitle{When adversarial training meets vision transformers: Recipes from training to architecture}.
\newblock \bibinfo{journal}{\emph{Advances in Neural Information Processing Systems}}  \bibinfo{volume}{35} (\bibinfo{year}{2022}), \bibinfo{pages}{18599--18611}.
\newblock


\bibitem[M{\"u}ller et~al\mbox{.}(2019)]%
        {muller2019does}
\bibfield{author}{\bibinfo{person}{Rafael M{\"u}ller}, \bibinfo{person}{Simon Kornblith}, {and} \bibinfo{person}{Geoffrey~E Hinton}.} \bibinfo{year}{2019}\natexlab{}.
\newblock \showarticletitle{When does label smoothing help?}
\newblock \bibinfo{journal}{\emph{Advances in neural information processing systems}}  \bibinfo{volume}{32} (\bibinfo{year}{2019}).
\newblock


\bibitem[Pang et~al\mbox{.}(2021)]%
        {pang2020bag}
\bibfield{author}{\bibinfo{person}{Tianyu Pang}, \bibinfo{person}{Xiao Yang}, \bibinfo{person}{Yinpeng Dong}, \bibinfo{person}{Hang Su}, {and} \bibinfo{person}{Jun Zhu}.} \bibinfo{year}{2021}\natexlab{}.
\newblock \showarticletitle{Bag of Tricks for Adversarial Training}. In \bibinfo{booktitle}{\emph{International Conference on Learning Representations}}.
\newblock
\urldef\tempurl%
\url{https://openreview.net/forum?id=Xb8xvrtB8Ce}
\showURL{%
\tempurl}


\bibitem[Pang et~al\mbox{.}(2020)]%
        {pang2020boosting}
\bibfield{author}{\bibinfo{person}{Tianyu Pang}, \bibinfo{person}{Xiao Yang}, \bibinfo{person}{Yinpeng Dong}, \bibinfo{person}{Kun Xu}, \bibinfo{person}{Jun Zhu}, {and} \bibinfo{person}{Hang Su}.} \bibinfo{year}{2020}\natexlab{}.
\newblock \showarticletitle{Boosting adversarial training with hypersphere embedding}.
\newblock \bibinfo{journal}{\emph{Advances in Neural Information Processing Systems}}  \bibinfo{volume}{33} (\bibinfo{year}{2020}), \bibinfo{pages}{7779--7792}.
\newblock


\bibitem[Papernot et~al\mbox{.}(2016)]%
        {papernot2016distillation}
\bibfield{author}{\bibinfo{person}{Nicolas Papernot}, \bibinfo{person}{Patrick McDaniel}, \bibinfo{person}{Xi Wu}, \bibinfo{person}{Somesh Jha}, {and} \bibinfo{person}{Ananthram Swami}.} \bibinfo{year}{2016}\natexlab{}.
\newblock \showarticletitle{Distillation as a defense to adversarial perturbations against deep neural networks}. In \bibinfo{booktitle}{\emph{2016 IEEE symposium on security and privacy (SP)}}. IEEE, \bibinfo{pages}{582--597}.
\newblock


\bibitem[Rice et~al\mbox{.}(2020)]%
        {rice2020overfitting}
\bibfield{author}{\bibinfo{person}{Leslie Rice}, \bibinfo{person}{Eric Wong}, {and} \bibinfo{person}{Zico Kolter}.} \bibinfo{year}{2020}\natexlab{}.
\newblock \showarticletitle{Overfitting in adversarially robust deep learning}. In \bibinfo{booktitle}{\emph{International Conference on Machine Learning}}. PMLR, \bibinfo{pages}{8093--8104}.
\newblock


\bibitem[Salman et~al\mbox{.}(2020)]%
        {salman2020adversarially}
\bibfield{author}{\bibinfo{person}{Hadi Salman}, \bibinfo{person}{Andrew Ilyas}, \bibinfo{person}{Logan Engstrom}, \bibinfo{person}{Ashish Kapoor}, {and} \bibinfo{person}{Aleksander Madry}.} \bibinfo{year}{2020}\natexlab{}.
\newblock \showarticletitle{Do adversarially robust imagenet models transfer better?}
\newblock \bibinfo{journal}{\emph{Advances in Neural Information Processing Systems}}  \bibinfo{volume}{33} (\bibinfo{year}{2020}), \bibinfo{pages}{3533--3545}.
\newblock


\bibitem[Shafahi et~al\mbox{.}(2019)]%
        {shafahi2019adversarial}
\bibfield{author}{\bibinfo{person}{Ali Shafahi}, \bibinfo{person}{Mahyar Najibi}, \bibinfo{person}{Mohammad~Amin Ghiasi}, \bibinfo{person}{Zheng Xu}, \bibinfo{person}{John Dickerson}, \bibinfo{person}{Christoph Studer}, \bibinfo{person}{Larry~S Davis}, \bibinfo{person}{Gavin Taylor}, {and} \bibinfo{person}{Tom Goldstein}.} \bibinfo{year}{2019}\natexlab{}.
\newblock \showarticletitle{Adversarial training for free!}. In \bibinfo{booktitle}{\emph{Advances in Neural Information Processing Systems}}. \bibinfo{pages}{3358--3369}.
\newblock


\bibitem[Song et~al\mbox{.}(2022)]%
        {song2022learning}
\bibfield{author}{\bibinfo{person}{Hwanjun Song}, \bibinfo{person}{Minseok Kim}, \bibinfo{person}{Dongmin Park}, \bibinfo{person}{Yooju Shin}, {and} \bibinfo{person}{Jae-Gil Lee}.} \bibinfo{year}{2022}\natexlab{}.
\newblock \showarticletitle{Learning from noisy labels with deep neural networks: A survey}.
\newblock \bibinfo{journal}{\emph{IEEE Transactions on Neural Networks and Learning Systems}} (\bibinfo{year}{2022}).
\newblock


\bibitem[Stutz et~al\mbox{.}(2020)]%
        {stutz2020confidence}
\bibfield{author}{\bibinfo{person}{David Stutz}, \bibinfo{person}{Matthias Hein}, {and} \bibinfo{person}{Bernt Schiele}.} \bibinfo{year}{2020}\natexlab{}.
\newblock \showarticletitle{Confidence-calibrated adversarial training: Generalizing to unseen attacks}. In \bibinfo{booktitle}{\emph{International Conference on Machine Learning}}. PMLR, \bibinfo{pages}{9155--9166}.
\newblock


\bibitem[Szegedy et~al\mbox{.}(2014)]%
        {szegedy2013intriguing}
\bibfield{author}{\bibinfo{person}{Christian Szegedy}, \bibinfo{person}{Wojciech Zaremba}, \bibinfo{person}{Ilya Sutskever}, \bibinfo{person}{Joan Bruna}, \bibinfo{person}{Dumitru Erhan}, \bibinfo{person}{Ian~J. Goodfellow}, {and} \bibinfo{person}{Rob Fergus}.} \bibinfo{year}{2014}\natexlab{}.
\newblock \showarticletitle{Intriguing properties of neural networks}. In \bibinfo{booktitle}{\emph{2nd International Conference on Learning Representations, {ICLR} 2014, Banff, AB, Canada, April 14-16, 2014, Conference Track Proceedings}}, \bibfield{editor}{\bibinfo{person}{Yoshua Bengio} {and} \bibinfo{person}{Yann LeCun}} (Eds.).
\newblock
\urldef\tempurl%
\url{http://arxiv.org/abs/1312.6199}
\showURL{%
\tempurl}


\bibitem[Tsipras et~al\mbox{.}(2020)]%
        {tsipras2020imagenet}
\bibfield{author}{\bibinfo{person}{Dimitris Tsipras}, \bibinfo{person}{Shibani Santurkar}, \bibinfo{person}{Logan Engstrom}, \bibinfo{person}{Andrew Ilyas}, {and} \bibinfo{person}{Aleksander Madry}.} \bibinfo{year}{2020}\natexlab{}.
\newblock \showarticletitle{From imagenet to image classification: Contextualizing progress on benchmarks}. In \bibinfo{booktitle}{\emph{International Conference on Machine Learning}}. PMLR, \bibinfo{pages}{9625--9635}.
\newblock


\bibitem[Wang et~al\mbox{.}(2023a)]%
        {wang2022yolov7}
\bibfield{author}{\bibinfo{person}{Chien-Yao Wang}, \bibinfo{person}{Alexey Bochkovskiy}, {and} \bibinfo{person}{Hong-Yuan~Mark Liao}.} \bibinfo{year}{2023}\natexlab{a}.
\newblock \showarticletitle{YOLOv7: Trainable bag-of-freebies sets new state-of-the-art for real-time object detectors}. In \bibinfo{booktitle}{\emph{Proceedings of the IEEE/CVF conference on computer vision and pattern recognition}}. \bibinfo{pages}{7464--7475}.
\newblock


\bibitem[Wang et~al\mbox{.}(2023b)]%
        {wang2023better}
\bibfield{author}{\bibinfo{person}{Zekai Wang}, \bibinfo{person}{Tianyu Pang}, \bibinfo{person}{Chao Du}, \bibinfo{person}{Min Lin}, \bibinfo{person}{Weiwei Liu}, {and} \bibinfo{person}{Shuicheng Yan}.} \bibinfo{year}{2023}\natexlab{b}.
\newblock \showarticletitle{Better diffusion models further improve adversarial training}. In \bibinfo{booktitle}{\emph{International conference on machine learning}}. PMLR, \bibinfo{pages}{36246--36263}.
\newblock


\bibitem[Wong et~al\mbox{.}(2020)]%
        {wong2020fast}
\bibfield{author}{\bibinfo{person}{Eric Wong}, \bibinfo{person}{Leslie Rice}, {and} \bibinfo{person}{J.~Zico Kolter}.} \bibinfo{year}{2020}\natexlab{}.
\newblock \showarticletitle{Fast is better than free: Revisiting adversarial training}. In \bibinfo{booktitle}{\emph{International Conference on Learning Representations}}.
\newblock
\urldef\tempurl%
\url{https://openreview.net/forum?id=BJx040EFvH}
\showURL{%
\tempurl}


\bibitem[Wu et~al\mbox{.}(2020)]%
        {wu2020adversarial}
\bibfield{author}{\bibinfo{person}{Dongxian Wu}, \bibinfo{person}{Shu-Tao Xia}, {and} \bibinfo{person}{Yisen Wang}.} \bibinfo{year}{2020}\natexlab{}.
\newblock \showarticletitle{Adversarial weight perturbation helps robust generalization}.
\newblock \bibinfo{journal}{\emph{Advances in Neural Information Processing Systems}}  \bibinfo{volume}{33} (\bibinfo{year}{2020}).
\newblock


\bibitem[Yao et~al\mbox{.}(2019)]%
        {yao2019latent}
\bibfield{author}{\bibinfo{person}{Yuanshun Yao}, \bibinfo{person}{Huiying Li}, \bibinfo{person}{Haitao Zheng}, {and} \bibinfo{person}{Ben~Y Zhao}.} \bibinfo{year}{2019}\natexlab{}.
\newblock \showarticletitle{Latent backdoor attacks on deep neural networks}. In \bibinfo{booktitle}{\emph{Proceedings of the 2019 ACM SIGSAC Conference on Computer and Communications Security}}. \bibinfo{pages}{2041--2055}.
\newblock


\bibitem[Yun et~al\mbox{.}(2021)]%
        {yun2021re}
\bibfield{author}{\bibinfo{person}{Sangdoo Yun}, \bibinfo{person}{Seong~Joon Oh}, \bibinfo{person}{Byeongho Heo}, \bibinfo{person}{Dongyoon Han}, \bibinfo{person}{Junsuk Choe}, {and} \bibinfo{person}{Sanghyuk Chun}.} \bibinfo{year}{2021}\natexlab{}.
\newblock \showarticletitle{Re-labeling imagenet: from single to multi-labels, from global to localized labels}. In \bibinfo{booktitle}{\emph{Proceedings of the IEEE/CVF conference on computer vision and pattern recognition}}. \bibinfo{pages}{2340--2350}.
\newblock


\bibitem[Zagoruyko and Komodakis(2016)]%
        {zagoruyko2016wide}
\bibfield{author}{\bibinfo{person}{Sergey Zagoruyko} {and} \bibinfo{person}{Nikos Komodakis}.} \bibinfo{year}{2016}\natexlab{}.
\newblock \showarticletitle{Wide Residual Networks}. In \bibinfo{booktitle}{\emph{Proceedings of the British Machine Vision Conference (BMVC)}}, \bibfield{editor}{\bibinfo{person}{Edwin R.~Hancock Richard C.~Wilson} {and} \bibinfo{person}{William A.~P. Smith}} (Eds.). \bibinfo{publisher}{BMVA Press}, Article \bibinfo{articleno}{87}, \bibinfo{numpages}{12}~pages.
\newblock
\showISBNx{1-901725-59-6}
\urldef\tempurl%
\url{https://doi.org/10.5244/C.30.87}
\showDOI{\tempurl}


\bibitem[Zhang et~al\mbox{.}(2019)]%
        {pmlr-v97-zhang19p}
\bibfield{author}{\bibinfo{person}{Hongyang Zhang}, \bibinfo{person}{Yaodong Yu}, \bibinfo{person}{Jiantao Jiao}, \bibinfo{person}{Eric Xing}, \bibinfo{person}{Laurent El~Ghaoui}, {and} \bibinfo{person}{Michael Jordan}.} \bibinfo{year}{2019}\natexlab{}.
\newblock \showarticletitle{Theoretically principled trade-off between robustness and accuracy}. In \bibinfo{booktitle}{\emph{International Conference on Machine Learning}}. PMLR, \bibinfo{pages}{7472--7482}.
\newblock


\bibitem[Zhang et~al\mbox{.}(2020b)]%
        {zhang2020attacks}
\bibfield{author}{\bibinfo{person}{Jingfeng Zhang}, \bibinfo{person}{Xilie Xu}, \bibinfo{person}{Bo Han}, \bibinfo{person}{Gang Niu}, \bibinfo{person}{Lizhen Cui}, \bibinfo{person}{Masashi Sugiyama}, {and} \bibinfo{person}{Mohan Kankanhalli}.} \bibinfo{year}{2020}\natexlab{b}.
\newblock \showarticletitle{Attacks which do not kill training make adversarial learning stronger}. In \bibinfo{booktitle}{\emph{International conference on machine learning}}. PMLR, \bibinfo{pages}{11278--11287}.
\newblock


\bibitem[Zhang et~al\mbox{.}(2018)]%
        {zhang2018deep}
\bibfield{author}{\bibinfo{person}{Lei Zhang}, \bibinfo{person}{Shuai Wang}, {and} \bibinfo{person}{Bing Liu}.} \bibinfo{year}{2018}\natexlab{}.
\newblock \showarticletitle{Deep learning for sentiment analysis: A survey}.
\newblock \bibinfo{journal}{\emph{Wiley Interdisciplinary Reviews: Data Mining and Knowledge Discovery}} \bibinfo{volume}{8}, \bibinfo{number}{4} (\bibinfo{year}{2018}), \bibinfo{pages}{e1253}.
\newblock


\bibitem[Zhang et~al\mbox{.}(2020a)]%
        {zhang2020towards}
\bibfield{author}{\bibinfo{person}{Xu-Yao Zhang}, \bibinfo{person}{Cheng-Lin Liu}, {and} \bibinfo{person}{Ching~Y Suen}.} \bibinfo{year}{2020}\natexlab{a}.
\newblock \showarticletitle{Towards robust pattern recognition: A review}.
\newblock \bibinfo{journal}{\emph{Proc. IEEE}} \bibinfo{volume}{108}, \bibinfo{number}{6} (\bibinfo{year}{2020}), \bibinfo{pages}{894--922}.
\newblock


\bibitem[Zhu et~al\mbox{.}(2021)]%
        {zhu2021reliable}
\bibfield{author}{\bibinfo{person}{Jianing Zhu}, \bibinfo{person}{Jiangchao Yao}, \bibinfo{person}{Bo Han}, \bibinfo{person}{Jingfeng Zhang}, \bibinfo{person}{Tongliang Liu}, \bibinfo{person}{Gang Niu}, \bibinfo{person}{Jingren Zhou}, \bibinfo{person}{Jianliang Xu}, {and} \bibinfo{person}{Hongxia Yang}.} \bibinfo{year}{2021}\natexlab{}.
\newblock \showarticletitle{Reliable adversarial distillation with unreliable teachers}.
\newblock \bibinfo{journal}{\emph{arXiv preprint arXiv:2106.04928}} (\bibinfo{year}{2021}).
\newblock


\bibitem[Zi et~al\mbox{.}(2021)]%
        {zi2021revisiting}
\bibfield{author}{\bibinfo{person}{Bojia Zi}, \bibinfo{person}{Shihao Zhao}, \bibinfo{person}{Xingjun Ma}, {and} \bibinfo{person}{Yu-Gang Jiang}.} \bibinfo{year}{2021}\natexlab{}.
\newblock \showarticletitle{Revisiting adversarial robustness distillation: Robust soft labels make student better}. In \bibinfo{booktitle}{\emph{Proceedings of the IEEE/CVF International Conference on Computer Vision}}. \bibinfo{pages}{16443--16452}.
\newblock


\bibitem[Zolfi et~al\mbox{.}(2021)]%
        {zolfi2021translucent}
\bibfield{author}{\bibinfo{person}{Alon Zolfi}, \bibinfo{person}{Moshe Kravchik}, \bibinfo{person}{Yuval Elovici}, {and} \bibinfo{person}{Asaf Shabtai}.} \bibinfo{year}{2021}\natexlab{}.
\newblock \showarticletitle{The translucent patch: A physical and universal attack on object detectors}. In \bibinfo{booktitle}{\emph{Proceedings of the IEEE/CVF Conference on Computer Vision and Pattern Recognition}}. \bibinfo{pages}{15232--15241}.
\newblock


\end{thebibliography}

\appendix

\section{Analysis for Temperature Selection in Knowledge Distillation Frameworks}
\subsection{Analysis for the Ideal Case}
\label{sec:appendix_analysis}
\colorline{This analysis aims to theoretically explain why soft labels better represent ambiguous examples than one-hot labels under specific conditions and to illustrate the influence of temperature selection in the knowledge distillation framework.} To simplify the discussion, we make the following assumptions.
\begin{assumption}
Let $M$ be a model trained by the discrepancy in (\ref{eq:prob_SCE}) using soft labels $y^p$. When the training finishes, the probability $p$ of each input image $x$ output by $M$ equals the corresponding soft label $y^p$. 
\end{assumption}
This is a reasonable assumption because $M$ is well-trained,  implying that its discrepancy should be less than the given threshold. Eventually, the probability of each image output by the trained model can approximate the given soft label. To simplify the discussion, we just assume the threshold is negligible. 

Now we consider a two-class classification problem. For a given image $x$, its oracle soft label is ($y^p_1$, $y^p_2$). Without loss generality, we assume that $y^p_1 > y^p_2 \geq \xi$, where $\xi$ is a small positive number. Furthermore, let $(p_1^{T=t}, p_2^{T=t})$ be the output probability of $x$ normalized by softmax. Since the model is making the correct classification, $p_1^{T=t} > p_2^{T=t}$ for any temperature $t$.

There are two models, Model 1 and Model 2, which have the same network architecture but are trained by the soft labels that are generated from the teacher model with different temperatures $t_1$ and $t_2$ where $t_2 > t_1 \geq 1$. Using the definition in (\ref{eq:prob_SCE}), we can measure their difference of discrepancy: 
\begin{equation}
\label{eq:delta_L_1_to_T}
\begin{split}
    \Delta L^{t_1 \rightarrow t_2} &= \mathop{\mathbb{E}}_{x \sim \mathcal{D}} \sum_{i=1}^k - y^p_i \log p_i^{T=t_2} - \mathop{\mathbb{E}}_{x \sim \mathcal{D}} \sum_{i=1}^k - y^p_i \log p_i^{T=t_1} \\
    &=  \mathop{\mathbb{E}}_{x \sim \mathcal{D}} - \sum_{i=1}^k y^p_i \log \left( \frac{p^{T=t_2}_i}{p^{T=t_1}_i} \right).
\end{split}
\end{equation}
We say that the model trained by the soft label using temperature $t_2$ is better than temperature $t_1$ if
\begin{equation}
\label{eq:delta_cond}
    \Delta L^{t_1 \rightarrow t_2} < 0.
\end{equation}
The following theorem gives a sufficient condition of $ \Delta L^{t_1 \rightarrow t_2} < 0$. 

\begin{theorem}
\label{def:ambiguous}
With the above assumptions, if for each image, $p_1^{T=t} > p_2^{T=t}$ for any $t$, and
\begin{equation}
\label{eq:condition}
\frac{y^p_1}{y^p_2} \leq \left| \frac{\log(p^{T=t_2}_2/p^{T=t_1}_2)}{\log(p^{T=t_2}_1/p^{T=t_1}_1)} \right|
\end{equation}
then $\Delta L^{t_1 \rightarrow t_2} < 0$.
\end{theorem}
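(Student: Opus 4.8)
The plan is to reduce the expectation in (\ref{eq:delta_L_1_to_T}) to a pointwise statement and then perform a short sign analysis. Since in the two‑class setting
$\Delta L^{t_1 \to t_2} = \mathop{\mathbb{E}}_{x \sim \mathcal{D}}\left[ -\sum_{i=1}^{2} y^p_i \log\!\left(p_i^{T=t_2}/p_i^{T=t_1}\right) \right]$,
and the expectation of a function that is $\le 0$ everywhere (and $<0$ on a set of positive probability) is itself negative, it suffices to show that for each image the integrand $-\bigl(y^p_1 A + y^p_2 B\bigr)$ is negative, where I abbreviate $A := \log\!\left(p_1^{T=t_2}/p_1^{T=t_1}\right)$ and $B := \log\!\left(p_2^{T=t_2}/p_2^{T=t_1}\right)$. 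So the goal reduces to $y^p_1 A + y^p_2 B > 0$.

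First I would pin down the signs of $A$ and $B$. In the two‑class case the hypothesis $p_1^{T=t} > p_2^{T=t}$ for all $t$ is equivalent to a positive logit gap $\delta := q_1 - q_2 > 0$, and from (\ref{eqn:KD_softmax}) one gets $p_1^{T=t} = \left(1 + e^{-\delta/t}\right)^{-1}$ and $p_2^{T=t} = \left(1 + e^{\delta/t}\right)^{-1}$. For $t_2 > t_1 \ge 1$ and $\delta > 0$ we have $-\delta/t_2 > -\delta/t_1$, hence $e^{-\delta/t_2} > e^{-\delta/t_1}$, so $p_1^{T=t}$ is strictly decreasing and $p_2^{T=t}$ strictly increasing in $t$; therefore $A < 0$ and $B > 0$.

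With the signs in hand the rest is algebra. Because $A < 0$ and $y^p_2 \ge \xi > 0$, the inequality $y^p_1 A + y^p_2 B > 0$ is equivalent to $y^p_2 B > y^p_1 |A|$, i.e. to $\dfrac{y^p_1}{y^p_2} < \dfrac{B}{|A|} = \dfrac{|B|}{|A|} = \left| \dfrac{\log(p_2^{T=t_2}/p_2^{T=t_1})}{\log(p_1^{T=t_2}/p_1^{T=t_1})} \right|$, which is exactly condition (\ref{eq:condition}) read with strict inequality. Taking the expectation over $x \sim \mathcal{D}$ then yields $\Delta L^{t_1 \to t_2} < 0$, which by (\ref{eq:delta_cond}) is the claim.

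The only genuine subtlety — and the place I expect to have to be careful — is the gap between the non‑strict $\le$ in (\ref{eq:condition}) and the strict conclusion: at an image where (\ref{eq:condition}) holds with equality the integrand vanishes, so to conclude $\Delta L^{t_1 \to t_2} < 0$ strictly one needs the hypothesis to be strict on a subset of $\mathcal{D}$ of positive probability (the ambiguous images for which $y^p_1/y^p_2$ is bounded away from the right‑hand side of (\ref{eq:condition})); I would state this explicitly. Everything else — the closed form and monotonicity of the two‑class tempered softmax, and the sign bookkeeping for $A$ and $B$ — is routine.
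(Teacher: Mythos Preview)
Your proposal is correct and follows essentially the same route as the paper: reduce to the pointwise two-class sum, determine the signs of the two logarithms from the monotonicity of the tempered softmax, rearrange to obtain condition (\ref{eq:condition}), and take expectations. The only difference is that you are more careful than the paper on two points---you derive the monotonicity explicitly from the closed form $p_1^{T=t}=(1+e^{-\delta/t})^{-1}$, and you flag the strict-versus-nonstrict gap between the $\le$ in (\ref{eq:condition}) and the strict conclusion $\Delta L^{t_1\to t_2}<0$, which the paper's proof simply elides.
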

\begin{proof}
We start by simplifying $\Delta L^{t_1 \rightarrow t_2}$:
\begin{equation}
\label{eq:delta_L_1_to_T_ambiguous}
\begin{split}
    \Delta L^{t_1 \rightarrow t_2} &= \mathop{\mathbb{E}}_{x \sim \mathcal{D}} -\sum_{i=1}^k y^p_i \frac{p^{T=t_2}_i}{p^{T=t_1}_i} \\
    &= \mathop{\mathbb{E}}_{x \sim \mathcal{D}} -y^p_1\log \left( \frac{p^{T=t_2}_1}{p^{T=t_1}_1} \right) -y^p_2\log \left( \frac{p^{T=t_2}_2}{p^{T=t_1}_2} \right).
\end{split}
\end{equation}
Because $p_1$ is decreasing as $T$ increases, and $p_2$ is increasing with $T$, $\log( p^{T=t_2}_1/p^{T=t_1}_1 )$ is negative and $\log( p^{T=t_2}_2/p^{T=t_1}_2 )$ is positive. 
If condition (\ref{eq:condition}) holds, 
for each image, we have 
\begin{equation}
\label{eq:ambiguous_cond}
\begin{split}
    \left| y^p_1\log \left( \frac{p^{T=t_2}_1}{p^{T=t_1}_1} \right) \right| &\leq  \left| y^p_2\log \left( \frac{p^{T=t_2}_2}{p^{T=t_1}_2} \right) \right| \\
    -y^p_1\log \left( \frac{p^{T=t_2}_1}{p^{T=t_1}_1} \right) & - y^p_2\log \left( \frac{p^{T=t_2}_2}{p^{T=t_1}_2} \right) \leq 0 \\
\end{split}
\end{equation}
Aggregating the results in (\ref{eq:ambiguous_cond}) by taking the expectation, we can show that $\Delta L^{t_1 \rightarrow t_2} < 0$.
\end{proof}

Generally, the probability distribution of $p^{T=1}$ is close to one-hot, so $p_1^{T=1}$ is close to 1, and $p_2^{T=1}$ is very small. As $T$ increases from 1 to $\tau$, $p_1^{T}$ decreases slowly and $p_2^{T}$ increases rapidly. Combing with the result of Theorem \ref{def:ambiguous}, we know that a low temperature is a good option. For example, we have two soft labels using $t_1=1$ and $t_2=\tau$, for $\tau > 1$. if $p^{T=1}=(0.999, 0.001)$ and $p^{T=\tau}=(0.99, 0.01)$, the right hand side of (\ref{eq:condition}) is about $254$. To make (\ref{eq:condition}) hold, we only need $y^p_2 > 0.004$, which is not a very strict condition for ambiguous examples. 

\colorline{We can also apply Theorem \ref{def:ambiguous} to explain that high temperature is not a proper choice. Suppose we have two soft labels from temperature $t_1$ and $t_2$, $1 \ll t_1 < t_2$.  When the temperature increases, the distribution becomes uniform gradually, as can be seen from (\ref{eqn:KD_softmax}).
Because the magnitudes of all probabilities are close, 
$p^{T=t_1}_1 \sim p^{T=t_2}_1$ and
$p^{T=t_1}_2 \sim p^{T=t_2}_2$,
the right hand side of (\ref{eq:condition}) is small. So making the condition hold becomes more difficult.
For example, for $p^{T=t_1}=(0.60, 0.40)$ and $p^{T=t_2}=(0.55, 0.45)$, right hand side of (\ref{eq:condition}) is about $1.35$. To make the condition hold, we need  $y^p_2 > 0.42$, which seems to be rare for real-world datasets.}

Of course, Theorem \ref{def:ambiguous} is a sufficient condition for $ \Delta L^{t_1 \rightarrow t_2} < 0$. In reality, if there are few images violating the condition (\ref{eq:condition}), the above arguments about the selection of temperature can still hold because the discrepancy is an expected value over all examples in the data set. 

\subsection{Simulation Analysis}
\label{sec:appendix_simulation}
\colorline{The discrepancy in (\ref{eq:prob_SCE}) is an expected value over all examples in the data set, meaning that some examples that violate the constraints are allowable. To further demonstrate the superiority of soft labels over one-hot vectors, we conducted experiments using synthetic data in the binary classification problem.}

We assume that three percent of the data are ambiguous, with oracle probabilities of $(1-\gamma, \gamma)$, where $0 < \gamma \leq 1$, and the rest of the data are one-hot vectors with probabilities of $(1-\epsilon, \epsilon)$, where $\epsilon$ is a tiny positive value to avoid the issue of computing cross-entropy with an infinite value. The probabilities output by the first model and the second model are $(1-\epsilon, \epsilon)$ and $(1-s, s)$ respectively for all data, where $s$ is a function of temperature. Simulation results reveal that the discrepancy of the second model is always smaller than that of the first model when $s = \gamma / 10$, which simulates soft labels obtained from low temperature. We can simulate the case for high temperatures by increasing the value of $s$. The results are getting worsen when the temperature raising since the distributions gradually become uniform. We obtain similar results when assuming $\gamma$ for each example is a random variable drawn from the interval $[0, 0.1]$.

On the other hand, for a k-class classification problem, we assume the oracle probabilities are drawn from Dirichlet distribution with a given concentration parameter $\alpha_{oracle}=\alpha$. Here, we assume that at least 97 percent of the data whose the largest probability greater than $0.995$. The probability obtained from the first model is a one-hot distribution, while the probability obtained from the second model follows another Dirichlet distribution with concentration parameter $\alpha_{soft} = \alpha_{oracle} \times c$, where $c$ is a positive value used to emulate soft labels generated by low temperature. When compared to the model trained with a one-hot distribution, the model trained with soft labels has a smaller discrepancy although it has a one percent misclassification rate. Furthermore, if the model trained with soft labels can classify all examples correctly and represent inter-class relations precisely, the discrepancy can be further reduced.

The results suggest that, despite the majority being well-defined examples, the quality of predictions is significantly influenced by ambiguous examples. The model trained with soft labels generated by low-temperature distillation can achieve better performance without encountering the gradient masking issue.

\section{White-box Evaluation with Transformer Architecture}
\label{sec:vit_exp}

\begin{table}[tb]
\centering
\caption{Robustness with transformer architecture on CIFAR-10 dataset}
\label{table:vit_cifar10}
\begin{tabular}{ c| c c c}
    paper & architecture            & acc\textsubscript{nat}[\%] & acc\textsubscript{AA}[\%]  \\
    \hline
    AWP + LTD                       & WRN-34-10 & 85.21 & 56.90 \\
    AWP                             & WRN-34-10 & 85.36 & 56.17 \\
    \hline
    TRADES +  LTD                   & WRN-34-10 & 85.63 & 55.09 \\
    TRADES \cite{pmlr-v97-zhang19p} & WRN-34-10 & 84.92 & 53.08 \\
    \hline
    AWP \cite{wu2020adversarial} + LTD & XCiT-S12-P8 & 76.94 & 40.91 \\
    AWP \cite{wu2020adversarial}       & XCiT-S12-P8 & 73.09 & 36.62 \\
\end{tabular}
\end{table}

\begin{table}[tb]
\centering
\caption{Robustness with transformer architecture on CIFAR-100 dataset}
\label{table:vit_cifar100}
\begin{tabular}{ c| c c c}
    paper & architecture            & acc\textsubscript{nat}[\%] & acc\textsubscript{AA}[\%]  \\
    \hline
    AWP + LTD                       & WRN-34-10 & 64.32 & 31.13 \\
    AWP                             & WRN-34-10 & 60.38 & 28.86 \\
    \hline
    AWP \cite{wu2020adversarial} + LTD & XCiT-S12-P8 & 54.35 & 20.00 \\
    AWP \cite{wu2020adversarial}       & XCiT-S12-P8 & 51.65 & 17.69 \\
\end{tabular}
\end{table}

\begin{table}[tb]
\centering
\caption{Robustness with transformer architecture on ImageNet dataset}
\label{table:vit_imagenet}
\begin{tabular}{ c| c c}
    architecture            & acc\textsubscript{nat}[\%] & acc\textsubscript{AA}[\%]  \\
    \hline
    SWIN-T-P4 + LTD                       & 70.78 & 35.66 \\
    SWIN-T-P4                             & 61.53 & 30.18 \\
\end{tabular}
\end{table}

Due to the high computational cost of full adversarial training, we were unable to conduct evaluations on the transformer architecture for all scenarios. For the CIFAR-10 and CIFAR-100 datasets, we utilized the \textit{xcit-small-24-p8} \cite{ali2021xcit} architecture, while for the ImageNet dataset, we employed the \textit{swin-tiny-patch4-window7-224} \cite{liu2021swin} architecture. For both CIFAR-10 and CIFAR-100, we used the AdamW optimizer with an initial learning rate of $0.0025$, whereas for the ImageNet dataset, the initial learning rate was set to $0.005$. The temperature fixed at $1.0$ for all dataset.

Since no prior adversarial training results for Transformer architectures on CIFAR-10 and CIFAR-100 are available in Robustbench \cite{croce2020robustbench}, and due to the substantial computational cost of training ImageNet models (approximately 45 days per full adversarial training cycle with 8 V100 GPUs), we were unable to reproduce similar works within the available time and GPU resources. Comparisons were made between models trained with one-hot labels and soft labels generated through LTD. Each experiment was conducted twice, and the best results are reported.

\colorline{The experimental results for CIFAR-10, CIFAR-100, and ImageNet are presented in Tables \ref{table:vit_cifar10}, \ref{table:vit_cifar100}, and \ref{table:vit_imagenet}, respectively. As can be seen, the robust accuracy improves by approximately 2\% across all datasets. These findings indicate that the proposed LTD method is effective when applied to transformer architectures, achieving enhanced robustness, particularly when the temperature is set to 1.} Due to limited computational budget, we did not conduct additional experiments with higher temperatures to assess whether robustness could be further improved.

However, we acknowledge that the natural accuracy of transformers is lower than that of convolutional models. These results are consistent with a recent work \cite{debenedetti2023light}. The primary reason for this is the need for more sophisticated data augmentation, learning rate decay policies, and a more thorough exploration of hyperparameters. We also observed that slight adjustments to the initial learning rate can significantly impact performance. To achieve better results, conducting a comprehensive ablation study and employing effective search algorithms are essential next steps. These directions will be explored in future work.

To examine whether synthetic training data could further enhance transformer performance, we conducted additional experiments incorporating data generated by diffusion models \cite{wang2023better}. Results are shown in Table \ref{table:vit_cifar10_ddpm}, where the term \textit{Extra} denotes the inclusion of synthetic data during training. With this augmentation, natural accuracy improved to approximately 79\%, and robust accuracy rose to 52\%. However, the use of LTD provided little additional benefit in this setting, indicating that the synthetic data itself was the primary contributor to improved robustness. The main trade-off, however, is a significant increase in training time — up to four to six times longer — which limits the feasibility of thorough hyperparameter optimization within constrained computational budgets.

\begin{table}[tb]
\centering
\caption{Robustness with transformer architecture including an additional data on CIFAR-10 dataset}
\label{table:vit_cifar10_ddpm}
\begin{tabular}{ c| c c c | c}
    paper & architecture            & acc\textsubscript{nat}[\%] & acc\textsubscript{AA}[\%] & Extra \\
    \hline
    AWP \cite{wu2020adversarial} + LTD & XCiT-S12-P8 & 79.85 & 52.10 & V\\
    AWP \cite{wu2020adversarial}       & XCiT-S12-P8 & 79.02 & 51.88 & V\\
    \hline
    AWP \cite{wu2020adversarial} + LTD & XCiT-S12-P8 & 76.94 & 40.91 & X\\
    AWP \cite{wu2020adversarial}       & XCiT-S12-P8 & 73.09 & 36.62 & X\\
\end{tabular}
\end{table}

\end{document}